\documentclass[letterpaper]{article} 
\usepackage{aaai24}  
\usepackage{times}  
\usepackage{helvet}  
\usepackage{courier}  
\usepackage[hyphens]{url}  
\usepackage{graphicx} 
\urlstyle{rm} 
\usepackage{natbib}  
\usepackage{caption} 
\frenchspacing  
\setlength{\pdfpagewidth}{8.5in}  
\setlength{\pdfpageheight}{11in}  
%
\usepackage{algorithm}
\usepackage{algorithmic}
\usepackage{booktabs}
\usepackage{multirow}

%
\usepackage{newfloat}
\usepackage{listings}
\DeclareCaptionStyle{ruled}{labelfont=normalfont,labelsep=colon,strut=off} 
\lstset{%
	basicstyle={\footnotesize\ttfamily},
	numbers=left,numberstyle=\footnotesize,xleftmargin=2em,
	aboveskip=0pt,belowskip=0pt,%
	showstringspaces=false,tabsize=2,breaklines=true}
\floatstyle{ruled}
\newfloat{listing}{tb}{lst}{}
\floatname{listing}{Listing}
%
\pdfinfo{
/TemplateVersion (2024.1)
}

\setcounter{secnumdepth}{2} 

%
\usepackage{color}
\usepackage{amsmath}
\usepackage{amssymb}
\usepackage{amsthm}
\usepackage{caption}
\usepackage{subcaption}
\usepackage{mathtools}
\usepackage{appendix}
\usepackage{tabularx,colortbl}


\usepackage{abstract}

\newtheorem{theorem}{Theorem}
\newtheorem{lemma}{Lemma}
\newtheorem{proposition}{Propostion}

\newtheorem{assumption}{Assumption}

\theoremstyle{definition}
\newtheorem{definition}{Definition}

\newtheorem{remark}{Remark}

\newcommand{\kh}[1]{\left(#1\right)}
\newcommand{\sz}[1]{\left|#1\right|}

\renewcommand{\emptyset}{\varnothing}

\def\defeq{\stackrel{\mathrm{def}}{=}}

\def\expec#1#2{{\mathbb{E}}_{#1}\left[ #2 \right]}

\newcommand{\norm}[1]{\left\lVert#1\right\rVert}
\DeclarePairedDelimiter\abs{\lvert}{\rvert}

\newcommand{\calA}{\mathcal{A}}
\newcommand{\calH}{\mathcal{H}}

\newcommand{\calO}{\mathcal{O}}

\newcommand\pp{\boldsymbol{\mathit{p}}}
\newcommand\qq{\boldsymbol{\mathit{q}}}
\newcommand\vv{\boldsymbol{\mathit{v}}}

\renewcommand{\emptyset}{\varnothing}

\DeclareMathOperator*{\argmin}{arg\,min}

\definecolor{lightGreen}{RGB}{150,255,150}

\title{Near-Optimal Resilient Aggregation Rules for Distributed Learning Using 1-Center and 1-Mean Clustering with Outliers}
\author{
    Yuhao Yi\textsuperscript{\rm 1}\equalcontrib, 
    Ronghui You\textsuperscript{\rm 2}\equalcontrib,
    Hong Liu\textsuperscript{\rm 1},
    Changxin Liu\textsuperscript{\rm 3},
    Yuan Wang\textsuperscript{\rm 4}\footnotemark[2],
    Jiancheng Lv\textsuperscript{\rm 1}\thanks{Corresponding author.}
}
\affiliations{
    \textsuperscript{\rm 1}College of Computer Science, Sichuan University\\ 
    \textsuperscript{\rm 2}School of Statistics and Data Science, Nankai University\\
    \textsuperscript{\rm 3}School of Electrical Engineering and Computer Science, KTH Royal Institute of Technology\\
    \textsuperscript{\rm 4}School of Robotics, Hunan University\\
    yuhaoyi@scu.edu.cn, nijuyoumo@gmail.com, changxin@kth.se, yuanw@hnu.edu.cn, lvjiancheng@scu.edu.cn
}
\usepackage{bibentry}

\begin{document}

\maketitle

\begin{abstract}
Byzantine machine learning has garnered considerable attention in light of the unpredictable faults that can occur in large-scale distributed learning systems. The key to secure resilience against Byzantine machines in distributed learning is resilient aggregation mechanisms. Although abundant resilient aggregation rules have been proposed, they are designed in ad-hoc manners, imposing extra barriers on comparing, analyzing, and improving the rules across performance criteria. This paper studies near-optimal aggregation rules using clustering in the presence of outliers. Our outlier-robust clustering approach utilizes geometric properties of the update vectors provided by workers. Our analysis show that constant approximations to the 1-center and 1-mean clustering problems with outliers provide near-optimal resilient aggregators for metric-based criteria, which have been proven to be crucial in the homogeneous and heterogeneous cases respectively. In addition, we discuss two contradicting types of attacks under which no single aggregation rule is guaranteed to improve upon the naive average. Based on the discussion, we propose a two-phase resilient aggregation framework. We run experiments for image classification using a non-convex loss function. The proposed algorithms outperform previously known aggregation rules by a large margin with both homogeneous and heterogeneous data distributions among non-faulty workers. Code and appendix are available at https://github.com/jerry907/AAAI24-RASHB.
\end{abstract}
\section{Introduction}

Distributed machine learning (ML) that involves several collaborative computing machines has been recognized as the backbone for training large-scale ML models in the modern society \cite{warnat2021swarm,MAL-083}. 
However, although distributed ML has significantly improved the efficiency of training process, it tends to be more vulnerable to misbehaving (a.k.a., \textit{Byzantine}) workers. It has been reported in \cite{NEURIPS2019_ec1c5914,KHJ22} that a few Byzantine machines can largely deteriorate the training performance by transmitting falsified information. To this end, Byzantine resilience in distributed ML has recently received increasing attention from both academia and industry.

In resilient distributed ML, a robust distributed optimization algorithm is designed such that the training model remains to be accurate in the presence of a subset of Byzantine workers \cite{FGGPS22}. The key to achieving this objective is a robust aggregation protocol within the server to distill the information sent by the workers. Prior works in this regard are roughly categorized into two classes depending on the property of the training dataset. In the first class, known as the \textit{homogeneous} setup, the data sampled by the workers are assumed to be identically distributed. Another class is the \textit{heterogeneous} setup~\cite{LXCGL19,DD21}, where the data samples among the workers may not precisely reflect the overall population. The difference in datasets induces distinct treatment of Byzantine workers and the best achievable performance \cite{KHJ22}, and the problem of resilient distributed ML under heterogeneity is arguably more challenging \cite{AFGGPS23}.



To facilitate the analysis of the aggregation rules, a line of recent work has established the connections between the properties of aggregators and the performance of the optimization algorithms. 
The paper~\cite{FGGPS22} studied resilient distributed ML with homogeneous data distributions and proposed the concept of $(f,\lambda)$-resilient averaging aggregators. Subsequent work study distributed ML with heterogeneous data distributions, proposing a series of concepts such as the $(\delta_{\max},\zeta)$-agnostic robust~\cite{KHJ22}, the $(f,\kappa)$-robust~\cite{AFGGPS23}, and the $(f, \xi)$-robust averaging~\cite{AGGPS23} aggregators. The relationships of these concepts are also discussed in~\cite{AFGGPS23} and~\cite{AGGPS23}. 

The criteria for the aggregation rules are defined using metrics on subsets of update vectors. However, most of the aggregators are not directly designed to minimize the criteria, resulting in the suboptimality of many aggregation rules. Exceptions include the MDA algorithm in~\cite{FGGPS22} and the SMEA algorithm in~\cite{AGGPS23}, both of which suffer from high computational costs.

The $1$-Center problem, or the minimum enclosing ball problem, is a fundamental problem in computational geometry~\cite{Yil08}, even in low dimensions~\cite{Sma65,Har11}. Its variant with outliers also receives significant interests~\cite{Shy18,Din20}. 1-Mean clustering with outliers is a similar problem using the sum of squared distances as the cost function. Approximation algorithms are intensively studied for clustering problems with outliers~\cite{FKRS19,AISX23,BOR21}. In this paper we propose to use 1-center and 1-mean clustering with outliers as resilient aggregation rules.

\paragraph{Related work.} 
In recent years, Byzantine resilient distributed ML has been intensively studied. Many algorithmic frameworks are proposed~\cite{YCKB18,liu2021approximate,AELA21} to address complex attacks developed progressively~\cite{XKG20,NEURIPS2019_ec1c5914}. These early discussions diverge in assumptions and overall frameworks of the training algorithms. A line of recent work based on resilient aggregating of momentum, or resilient stochastic heavy ball~\cite{KHJ21,KHJ22,FGGPS22,AFGGPS23,AGGPS23}, is the most relevant to its paper. 

Under the framework, properties of some previously designed aggregators are investigated. Some provably optimal aggregators often have high computational costs while many commonly used simple aggregators turn out to be suboptimal. Pre-aggregation steps such as bucketing~\cite{KHJ22} and nearest neighbor mixing (NNM)~\cite{AFGGPS23} are proposed to improve the performance of aggregators.


Clustering algorithms~\cite{GHYR19,SMWS20} are also proposed and analyzed under more restrictive assumptions, where machines in the same cluster have the same data distribution. \citeauthor{EGR18} studied a medoid-based algorithm as an approximation to the geometric median~\cite{CSX17}, but its optimality remains to be studied. 
In this paper, we reduce the problems of optimal aggregation rule
design to 1-center/mean clustering problems with outliers, and apply computationally efficient approximations to construct near-optimal aggregators.


\paragraph{Main contributions.} In this paper, we propose near-optimal aggregators for Byzantine resilient distributed ML using approximation algorithms for the problems of 1-center and 1-mean clustering with outliers. Although the problems of 1-center and 1-mean clustering with outliers are NP-hard, their approximations can be computed efficiently. We show that $2$-approximations achieve near-optimal safety guarantees under existing analytical framework. Specifically, the 1-center with outliers algorithm is optimal for $(f, \lambda)$-resilience and achieves the currently best bound for $(\delta_{\max},\zeta)$-agnostic robustness; the 1-mean with outliers approach is optimal for $(f, \kappa)$-robustness. 

In addition, we discuss two types of contradicting attacks, namely the sneak attack and the siege attack, to show that no single aggregation algorithm, being agnostic about the true distribution of update vectors of normal clients, outperforms other algorithms in all circumstances. To address the dilemma created by the indistinguishability of the two types of attacks, we propose a two-phase aggregation framework. In the framework, 1) the server proposes two candidate sets of parameters using received update vectors; 2) clients elect a set of parameters to commit by evaluating the losses using resampled data. Using clustering-based approaches, the two proposed sets of parameters are easily generated by constructing filters to address the sneak attacks and the siege attacks respectively. 

In summary, this paper 1) proposes near-optimal aggregation rules with provable guarantees by approximating the problems of 1-center/mean clustering with outliers; 2) proposes a two-phase aggregating of optimization framework in which the clustering approaches are used to defend two types of contradicting attacks; 3) empirically shows the advantages of our approach over existing aggregators by performing image classification under various attacks.

\paragraph{Outline.} The remainder of the paper is organized as follows: In Section 2 we introduce the problem setup, some basic concepts and definitions. In Section 3, we introduce the proposed aggregators and prove their robust guarantees. In Section 4, we discuss two types of contradicting attacks which motivates the two-phase aggregating framework. In Section 5 we show empirical results, followed by the section for conclusion and future work.

We use the words robustness and resilience interchangeably in this paper except for formally defined concepts.

\section{Byzantine Resilient Distributed Learning}

In this section, we introduce the Byzantine ML problem, which is followed by a general resilient framework for distributed learning.
Then, we recall some useful robust notions of aggregation rules, under which the distributed learning algorithms are provably resilient and convergent.

\subsection{Problem Setup}

Consider a server-worker distributed learning system with one central server and $n$ workers. Each worker $i\in [n]$ possesses a local dataset consisting of $m$ data points $\mathcal{D}_i:=\{ z_1^{(i)},\dots, z_m^{(i)}  \}$.
The server stores sets of model parameters and update vectors received from the workers. For a given ML model parameterized by $\theta \in\mathbb{R}^d$, each worker $i$ has a local loss function
    $\mathcal{L}_i(\theta) := \frac{1}{m} \sum_{k=1}^m l(\theta, z_k^{(i)})$, 
where $l(\cdot, \cdot)$ represents the loss over a single data point. We assume $l(\cdot, \cdot)$ is differentiable with respect to the first argument, and each $L_i(\cdot)$ is $L$-smooth, that is, $\lVert \nabla \mathcal{L}_i(\theta_1)- \nabla \mathcal{L}_i(\theta_2) \rVert  \leq  L \lVert\theta_1-\theta_2\rVert, \,\, \forall \theta_1,\theta_2\in \mathbb{R}^d$.

We consider a standard adversarial setting where the server is honest and $f$ workers with unknown identities are Byzantine \cite{lamport2019byzantine}.  
The Byzantine workers need not follow the given learning protocol and may behave arbitrarily in the learning process. However they cannot make other workers faulty, falsify the message of any other nodes, or block message passing between the server and any honest (or non-Byzantine) workers.

In real-world ML applications, the datasets held by the honest workers are typically heterogeneous~\cite{shi2023prior}. In this work, we model data heterogeneity by the following standard assumption \cite{KHJ22}.
\begin{assumption}[\bf{Bounded heterogeneity}]\label{assump:bounded_hetero}
Let $\mathcal{H}$ denote the set of indices of honest workers and $\mathcal{L}_{\mathcal{H}}(\theta):=\lvert \mathcal{H}\rvert^{-1} \sum_{i\in\mathcal{H}}\mathcal{L}_i(\theta)$.
    There exists a positive value $G$ such that
        $\frac{1}{\lvert \mathcal{H}\rvert}\sum_{i\in\mathcal{H}} \lVert \nabla \mathcal{L}_i(\theta)- \nabla \mathcal{L}_{\mathcal{H}}(\theta) \rVert^2 \leq G^2, \,\, \forall \theta\in\mathbb{R}^d.$
\end{assumption}

The goal of the server is to approximate a stationary point of $\mathcal{L}_{\mathcal{H}}(\theta)$. Throughout this process, the server iteratively updates a model based on the stochastic gradients received from the workers. 
To proceed, we introduce the concept of Byzantine resilience as follows.

\begin{definition}[\bf{$(f,\varepsilon)$-Byzantine resilience}]
    A learning algorithm is said $(f,\varepsilon)$-Byzantine resilient if, even in the presence of $f$ Byzantine workers, it outputs $\hat{\theta}$ satisfying
        $\lVert \nabla\mathcal{L}_{\mathcal{H}} (\hat{\theta}) \rVert^2 \leq \varepsilon.$
\end{definition}

We note that $(f,\varepsilon)$-Byzantine resilience is generally not possible (for any $\varepsilon$) when $f\geq n/2$ \cite{liu2021approximate}. Therefore, we assume an upper bound for the number of Byzantine workers $f<n/2$ in this work. Furthermore, the heterogeneous datasets render the Byzantine distributed ML much more challenging, as the incorrect gradients from Byzantine workers and the correct gradients from honest workers becomes more difficult to distinguish in this case; see \cite{KHJ22} for a detailed discussion and a lower bound of the training error.

\subsection{Resilient Distributed Learning Algorithm}

We recall a class of resilient distributed learning algorithms in Algorithm~\ref{alg:skeleton}, which we call {Resilient Aggregated Stochastic Heavy Ball (RASHB)} in this work. This approach aggregates stochastic momentum in a resilient manner, which can be applied in both homogeneous~\cite{FGGPS22} and heterogeneous~\cite{AGGPS23} worker settings.


To proceed, we present four useful definitions for the robustness of aggregation rules in the literature, under which the convergence result of Algorithm~\ref{alg:skeleton} follows.

\begin{algorithm}[tb]
\caption{RASHB}
\label{alg:skeleton}
\textbf{Initialization}: for server and worker: Initial model $\theta_0$, initial momentum $m_0^{(1)}=0$, the number of rounds $T$; for each honest worker $w_i$: , robust aggregation $F$, batch size $b$, learning rates $\left\{\gamma_t\right\}_{t=1}^T$, momentum coefficient $\beta$.\\
\begin{algorithmic}[1] 
\FOR{$t= 0, \ldots, T-1$}
\STATE {\bf{Server}} broadcasts $\theta_t$ to all workers;
\FOR{every honest worker $w_i$, $i\in \calH$, in parallel}
\STATE Compute a local stochastic gradient $g_{t}^{(i)}$ using mini-batch data samples;
\STATE Update local momentum:
\begin{equation*}
    m_t^{(i)}=\beta m_t^{(i)} +(1-\beta)g_t^{(i)};
\end{equation*}
\STATE Send $m_t^{(i)}$ to the server;
\ENDFOR
\STATE Server aggregates the received momentums:
$$R_t = F(\{m_t^{(1)}, \ldots, m_t^{(n)}\});$$
\STATE Server updates the model: $\theta_t = \theta_{t-1} - \gamma_t R_t$;
\ENDFOR
\STATE \textbf{return} $\frac{1}{T}\sum_{t=0}^{T-1} \theta_{t}$;
\end{algorithmic}
\end{algorithm}


First, the definition of $(f,\lambda)$-resilient averaging was proposed in~\cite{FGGPS22} for homogeneous data.
\begin{definition}[\bf{$(f,\lambda)$-resilient averaging}]
    Given an integer $f<n/2$ and a real number $\lambda \geq 0$, an 
    aggregation rule $F$ is called $(f,\lambda)$-resilient averaging 
    if for any set of $n$ vectors $X := \left\{x_i\right\}_{i=1}^n$, 
    and any subset $S\subseteq [n]$ with $\sz{S} = n-f$,
    \begin{align}
        \norm{F(X) - \overline{x}_S} \leq \lambda \max_{i,j\in S}\norm{x_i - x_j}\,,
    \end{align}
    where $\overline{x}_S := \frac{1}{\sz{S}}\sum_{i\in S}x_i$.
\end{definition}

Second, to address the heterogeneity in worker's data, \citeauthor{KHJ22} proposed the concept of agnostic robust aggregator (ARAgg) for a (randomized or deterministic) aggregation rule. 
\begin{definition}[\bf{$(\delta_{\max}, \zeta)$-ARAgg}]
    Given a set of $n$ vectors $X:= \left\{x_i\right\}_{i=1}^n$ and a subset $S\subseteq [n]$ with $\sz{S} = n-f$ with $f/n \leq \delta_{\max} < 0.5$ satisfying $\expec{}{\norm{x_i-x_j}^2} \leq \rho^2$ for all $i,j\in S$, 
    the output $F(X)$ of a $(\delta_{\max}, \zeta)$-ARAgg satisfies
    \begin{align}
        \expec{}{\norm{F(X) - \overline{x}_{S}}^2} \leq \zeta \frac{f}{n} \rho^2\,.
    \end{align}
\end{definition}


Third, a stronger notion of $(f,\kappa)$-robustness for aggregation rules was then proposed by the paper~\cite{AFGGPS23}.
\begin{definition}[\bf{$(f,\kappa)$-robustness}]
    Given an integer $f< n/2$ and a real number $\kappa \geq 0$, an aggregation rule $F$ is $(f,\kappa)$-robust if for any set of n vectors $X\:= \left\{x_i\right\}_{i=1}^n$, 
    and any subset $S\subseteq [n]$ with $\sz{S} = n-f$,
    \begin{align}
                \norm{F(X) - \overline{x}_S}^2 \leq \frac{\kappa}{\sz{S}}\sum_{i\in S}\norm{x_i - \overline{x}_S}^2\,.
    \end{align}
\end{definition}

Lastly, a recent work~\cite{AGGPS23} introduced the $(f,\xi)$-robust averaging criterion. Compared with $(f,\kappa)$-robustness, it considers the maximum eigenvalue of the covariance matrix of a set of data points instead of its trace. Therefore it controls the deviation form honest values in \emph{all} directions.

\begin{definition}[\bf{$(f,\xi)$-robust averaging}]
    Given an integer $f< n/2$ and a real number $\xi \geq 0$, an aggregation rule $F$ is $(f,\xi)$-robust averaging if for any set of $n$ vectors $X:= \left\{x_i\right\}_{i=1}^n$, 
    and any subset $S\subseteq [n]$ with $\sz{S} = n-f$,
    \begin{align}
                &\norm{F(X) - \overline{x}_S}^2 \leq \xi\cdot  \lambda_{\max}\kh{M_S}\,,
    \end{align}
    where $\overline{x}_S := \frac{1}{\sz{S}}\sum_{i\in S}x_i$; $\lambda_{\max}(\cdot)$ is the eigenvalue of a matrix; and
    $M_S := \frac{1}{\sz{S}}\sum_{i\in S}(x_i - \overline{x}_S)(x_i - \overline{x}_S)^{\top}$.
\end{definition}

\begin{assumption}[\bf{Bounded variance}]\label{assump:bounded_var}
For each honest worker $i$, there holds that
        $\frac{1}{m}\sum_{z\in\mathcal{D}_i} \lVert \nabla_{\theta} l(\theta,z)-\nabla \mathcal{L}_i(\theta) \rVert^2 \leq \sigma^2, \,\, \forall \theta\in\mathbb{R}^d$.
\end{assumption}

We are in a position to present the convergence results for Algorithm \ref{alg:skeleton}, whose proof can be found in existing works that proposed the robustness definitions. 
\begin{theorem}\label{thm:convergence}
   Suppose Assumptions  \ref{assump:bounded_var} and \ref{assump:bounded_hetero} hold, and recall that $\mathcal{L}_{\mathcal{H}}(\cdot)$ is $L$-smooth. Consider Algorithm \ref{alg:skeleton} and define $\text{Res}_T=  T^{-1}\sum_{t=1}^T\expec{}{\norm{\nabla \mathcal{L}_{\mathcal{H}}(\theta_{t-1})}^2}$.

   \begin{itemize}
       \item[i)] If $F$ is a $(f,\lambda)$-resilient aggregation rule and $G=0$, then $ \text{Res}_T \leq \mathcal{O}\left(\sqrt{(n-f)} \cdot {\lambda} {\sigma}/{\sqrt{T}}\right)$;
        \item[ii)] If $F$ is a $(\delta_{\text{max}},\zeta)$-ARAgg aggregation rule and $G>0$, then
$
           \text{Res}_T \leq \mathcal{O}\left(\zeta {f}G^2/{n}+{\sigma} {\sqrt{\zeta f+1}}/{\sqrt{n T}}\right)
$;
        \item[iii)] If $F$ is a $(f,\kappa)$-robust aggregation rule and $G>0$, then
$
           \text{Res}_T \leq \mathcal{O}\left(\kappa G^2+{\sigma}/{\sqrt{T}}\right)
$;
        \item[iv)] If the aggregation rule $F$ satisfies the condition of $(f,\xi)$-robust averaging and $G>0$, then
$
           \text{Res}_T \leq \mathcal{O}\left(\xi G^2+{\sigma}/{\sqrt{T}}\right)
$.
   \end{itemize}
\end{theorem}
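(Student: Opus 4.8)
The statement assembles, under one algorithm (RASHB) and the single pair of Assumptions~\ref{assump:bounded_var}--\ref{assump:bounded_hetero}, four convergence guarantees that were each proved in the works that introduced the corresponding robustness notion: item (i) follows the $(f,\lambda)$-resilient-averaging analysis of \cite{FGGPS22}, item (ii) the ARAgg analysis of \cite{KHJ22}, item (iii) the $(f,\kappa)$-robustness analysis of \cite{AFGGPS23}, and item (iv) the $(f,\xi)$-robust-averaging analysis of \cite{AGGPS23}. The plan is therefore not to redo each analysis from scratch but to run them through a common template and then specialize one quantity -- the aggregation error -- to each notion. The template is the standard descent argument for resilient stochastic heavy ball: (a) control how well the honest-average momentum tracks $\nabla \mathcal{L}_{\mathcal{H}}$ at the current iterate; (b) control the empirical dispersion of the honest momentums, which is exactly what the four definitions convert into an aggregation-error bound; (c) substitute both into the $L$-smoothness descent inequality and telescope, choosing $\beta$ and $\{\gamma_t\}$ to make the residual terms vanish at the advertised rate.

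Concretely, write $\overline{m}_t := \sz{\mathcal{H}}^{-1}\sum_{i\in\mathcal{H}} m_t^{(i)}$ for the honest-average momentum and $\delta_t := R_t - \overline{m}_t$ for the aggregation error. First I would establish the momentum lemma: unrolling $m_t = \beta m_{t-1} + (1-\beta)g_t$ and using $L$-smoothness together with Assumption~\ref{assump:bounded_var}, one obtains $\mathbb{E}\norm{\overline{m}_t - \nabla \mathcal{L}_{\mathcal{H}}(\theta_{t-1})}^2$ bounded by a noise term of order $(1-\beta)\sigma^2/\sz{\mathcal{H}}$ plus an iterate-drift bias of order $\beta L^2 \gamma^2$ times recent squared gradients. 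Second, I would bound the honest dispersion $\frac{1}{\sz{\mathcal{H}}}\sum_{i\in\mathcal{H}}\norm{m_t^{(i)} - \overline{m}_t}^2$: combining Assumption~\ref{assump:bounded_hetero} (heterogeneity $G^2$) with Assumption~\ref{assump:bounded_var} (per-worker variance, damped by the momentum to order $(1-\beta)\sigma^2$) yields a bound of order $G^2 + (1-\beta)\sigma^2$ on this quantity, hence (up to a factor $\sz{\mathcal{H}}$ via $\norm{m_t^{(i)}-m_t^{(j)}}^2 \le 4\,\sz{\mathcal{H}}\cdot\frac{1}{\sz{\mathcal{H}}}\sum_{k\in\mathcal{H}}\norm{m_t^{(k)}-\overline{m}_t}^2$) on $\max_{i,j\in\mathcal{H}}\norm{m_t^{(i)}-m_t^{(j)}}^2$, and on $\lambda_{\max}$ and $\mathrm{Tr}$ of the honest covariance. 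Third, I would apply $\mathcal{L}_{\mathcal{H}}(\theta_t)\le \mathcal{L}_{\mathcal{H}}(\theta_{t-1}) - \gamma_t\langle \nabla \mathcal{L}_{\mathcal{H}}(\theta_{t-1}), R_t\rangle + \tfrac{L\gamma_t^2}{2}\norm{R_t}^2$, substitute $R_t = \nabla \mathcal{L}_{\mathcal{H}}(\theta_{t-1}) + (\overline{m}_t - \nabla \mathcal{L}_{\mathcal{H}}(\theta_{t-1})) + \delta_t$, split the cross terms with Young's inequality, sum over $t=1,\dots,T$, and pick $\gamma_t = \Theta(1/\sqrt{T})$ and $1-\beta$ of a comparable (case-dependent) order so that the noise, the drift bias, and the aggregation error all collapse to the stated bounds.

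The four items then differ only in how step two is fed through the respective definition. For (i), $G=0$ makes the dispersion purely stochastic, and $(f,\lambda)$-resilient averaging gives $\norm{\delta_t}\le\lambda\max_{i,j\in\mathcal{H}}\norm{m_t^{(i)}-m_t^{(j)}}$; the factor $\sz{\mathcal{H}}=n-f$ from the dispersion bound, propagated through the calibrated hyperparameters, surfaces as $\sqrt{n-f}$ in the rate $\mathcal{O}(\sqrt{n-f}\,\lambda\sigma/\sqrt{T})$. For (ii), ARAgg gives $\mathbb{E}\norm{\delta_t}^2\le \zeta\tfrac{f}{n}\rho^2$ with $\rho^2 \lesssim G^2 + (1-\beta)\sigma^2$, so the irreducible part contributes the floor $\zeta\tfrac{f}{n}G^2$ and the vanishing part the $\sigma\sqrt{\zeta f+1}/\sqrt{nT}$ term. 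For (iii), $(f,\kappa)$-robustness gives $\norm{\delta_t}^2\le \kappa\cdot\frac{1}{\sz{\mathcal{H}}}\sum_{i\in\mathcal{H}}\norm{m_t^{(i)}-\overline{m}_t}^2 \lesssim \kappa(G^2 + (1-\beta)\sigma^2)$, leaving the floor $\kappa G^2$. For (iv), since $\lambda_{\max}(M_S)\le \mathrm{Tr}(M_S) = \frac{1}{\sz{\mathcal{H}}}\sum_{i\in\mathcal{H}}\norm{m_t^{(i)}-\overline{m}_t}^2$, the $(f,\xi)$-robust-averaging bound is no weaker than the $(f,\kappa)$-type bound with $\kappa\to\xi$, so the argument of (iii) transfers verbatim to give the floor $\xi G^2$.

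I expect the main obstacle to be bookkeeping rather than conceptual: getting the coupling between $1-\beta$ and $\gamma_t$ exactly right so that the momentum-accumulated noise $\propto (1-\beta)\sigma^2$, the drift bias $\propto \gamma^2 L^2/(1-\beta)$, and the aggregation-error contribution balance at the same rate, and -- in the heterogeneous cases (ii)--(iv) -- cleanly separating the $G^2$-proportional floor from the $1/\sqrt{T}$-vanishing terms when the aggregation error does not shrink with $T$. A second, genuine difficulty is that the four source analyses use different momentum conventions, normalizations, and stepsize schedules, so some care is needed to verify that Algorithm~\ref{alg:skeleton} under Assumptions~\ref{assump:bounded_var}--\ref{assump:bounded_hetero} is a bona fide instance of each setting, after which the cited bounds apply directly.
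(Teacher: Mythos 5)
Your proposal takes essentially the same route as the paper: the paper offers no proof of its own for this theorem, stating only that the proof "can be found in existing works that proposed the robustness definitions," and your mapping of items (i)--(iv) to \cite{FGGPS22}, \cite{KHJ22}, \cite{AFGGPS23}, and \cite{AGGPS23} respectively matches the paper's intent exactly. Your additional unified sketch (momentum-tracking lemma, honest-dispersion bound fed through each robustness definition, smoothness descent plus telescoping, and the observation that $\lambda_{\max}(M_S)\le\mathrm{Tr}(M_S)$ reduces (iv) to (iii)) is a correct outline of how those cited analyses actually proceed, so there is nothing to object to.
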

Theorem \ref{thm:convergence} highlights the crucial significance of the resilient aggregation rule in Byzantine distributed learning. This rule not only ensures resilience against Byzantine workers but also influences the overall learning performance.


\section{A Framework for Resilient Aggregation}
In this section we develop a resilient aggregation algorithmic framework using 1-center and 1-mean clustering with outliers. We provide analysis for the proposed aggregation rules to show their near-optimality under various criteria.

\subsection{$1$-Center/Mean Clustering with Outliers}

The \emph{$1$-center clustering} problem is also referred to as the \emph{minimum enclosing ball} problem. The problem is to find a ball with minimum radius containing all given points. The problem with outliers is defined as 
\begin{definition}[\bf{1-center clustering with outliers, or minimum enclosing ball with outliers}]
Given a set of $n$ points $X$ in $\mathbb{R}^d$, and an integer $f<n$ indicating the largest number of faulty points in $X$, find a ball $B(c,r)$ with a center $c\in \mathbb{R}^d$ and a radius $r\in \mathbb{R}$ to cover $(n-f)$ points in $X$, such that $r$ is the minimum among all possible balls.
\end{definition}
The minimum enclosing ball with outliers problem has been shown to be strongly NP-hard~\cite{She13}.

In the \emph{$1$-mean clustering problem}, given a set of $n$ points $X$, the aim is to find a point $c$ in the given space so as to minimize the sum of squared distances from each point $x\in X$ to $c$. The centroid (also called the center of mass) of a set $X$ is defined as $\mathrm{cm}(X):= 1/\sz{X}\cdot \sum_{x_i\in X}x_i$. It is known that the centroid of a given set of points is the optimal solution to the problem. Now we introduce a variant of the problem in the presence of outliers.
\begin{definition}[\bf{1-mean clustering with outliers}]
    Given a set of $n$ points $X$ in $\mathbb{R}$, and the largest number of faulty nodes $f<n$, the problem is to find a vector $c\in \mathbb{R}^d$ so as to minimize 
        $\sum_{x\in K(c)} \norm{c - x}^2$,
    where $K(c)$ is the nearest $(n-f)$ points in $X$ to $c$.
\end{definition}
In a similar manner as the proof of hardness for the problem of minimum enclosing ball with outliers~\cite{She13}, the problem of 1-means clustering with outliers can also be shown to be strongly NP-hard, by a reduction from the $k$-clique problem in regular graphs.


\subsection{Efficient Approximation Algorithms}
The two clustering problems considered are special cases of $k$-center/means clustering with outliers, which are central problems in both geometry and learning. Since both problems are NP-hard, it is natural to seek for approximations. 

\begin{definition}
A polynomial time algorithm $\calA$ is called an $\alpha$-approximation algorithm for a (minimization) optimization problem if for all instances of the problem, $\calA$ produces a solution whose value is guaranteed to be at most $\alpha$ times the optimum value.    
\end{definition}

In this paper we use simple approximation algorithms that consider all data points in $X$ as cluster center candidates instead of all points in $\mathbb{R}^d$. Mostly in the $k$-means setup, cluster centers chosen from data points are also called medoids. Given a cluster center $x$ and a set of data points $S$, we define the cost function
\begin{align}
\label{eqn:costdefine}
    \mathrm{cost}(c, S) := 
    \begin{cases}
        \max_{x\in S}\norm{x - c}  & {\text{(CenterwO)}}\\
        \sum_{x\in S}\norm{x - c}^2  & {\text{(MeanwO)}}
    \end{cases}
\end{align}
for the approximate 1-center/mean clustering problem. The aggregation rules using approximations to the $1$-center/mean clustering with outliers are shown in Algorithm~\ref{alg:ClusterwO}.


\begin{algorithm}[tb]
\caption{CenterwO/MeanwO}
\label{alg:ClusterwO}
\textbf{Input}: a set of $n$ vectors $X$ in $\mathbb{R}^d$, and an integer $f < \frac{n}{2}$.\\
\textbf{Output}: the mass center of the $n-f$ points in an approximate minimum 1-center/mean cluster with $f$ outliers.
\begin{algorithmic}[1] 
\FOR{ $i = 1,\ldots, n$}
\STATE Find $K(x_i)$, the $n-f$ closest vectors in $X$ to the vector $x_i$ (including $x_i$), breaking ties arbitrarily;
\STATE Let ${\mathrm{dist}}_i = \mathrm{cost}(x_i, K(x_i))$;\\
 {\text{/*see (\ref{eqn:costdefine}) for definition of 1-center/mean cost*/}}
\ENDFOR
\STATE Let $j \in \argmin_{i} {\mathrm{dist}}_i$;
\RETURN $\frac{1}{n-f}\sum_{x\in K(x_j)} x$;
\end{algorithmic}
\end{algorithm}


\begin{lemma}
\label{lem:approxRatio}
    The {\rm{CenterwO}} and {\rm{MeanwO}} algorithms are $2$-approximations to the problems of 1-center and 1-mean clustering with outliers, respectively.
\end{lemma}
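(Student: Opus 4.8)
The plan is to analyze the two cost functions separately, but in both cases the core idea is the same: compare the solution returned by Algorithm~\ref{alg:ClusterwO}, which restricts cluster centers to data points, against an optimal center $c^\star\in\mathbb{R}^d$ that is allowed to be arbitrary. Let $S^\star$ denote an optimal set of $n-f$ inliers for $c^\star$, so that $\mathrm{cost}(c^\star,S^\star)=\OPT$. The key observation is that $S^\star$ is nonempty and contains at least one data point, say $p$; I would feed $p$ into the algorithm's loop (it is one of the candidates $x_i$) and bound $\mathrm{dist}_p = \mathrm{cost}(p,K(p))$ from above, where $K(p)$ is the set of $n-f$ points closest to $p$. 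Since the algorithm outputs the centroid of $K(x_j)$ for the minimizing index $j$, and since the returned cost is at most $\mathrm{dist}_p$, it suffices to show $\mathrm{dist}_p\le 2\,\OPT$ and that passing from the best data-point center to its centroid does not increase the cost (for the center case) or only helps (for the mean case).

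For \textbf{CenterwO}: the optimal ball $B(c^\star,\OPT)$ contains all of $S^\star$, so every point of $S^\star$ is within distance $2\,\OPT$ of $p$ by the triangle inequality (both $p$ and the point lie in the ball of radius $\OPT$). Hence among the points within distance $2\OPT$ of $p$ there are at least $n-f$ of them, so $K(p)$ — the $n-f$ nearest to $p$ — satisfies $\max_{x\in K(p)}\norm{x-p}\le 2\OPT$, giving $\mathrm{dist}_p\le 2\OPT$. Therefore $\mathrm{dist}_j\le 2\OPT$, i.e. the ball $B(x_j,2\OPT)$ covers $K(x_j)$. Finally, for the returned centroid $\bar c$ of $K(x_j)$, every $x\in K(x_j)$ satisfies $\norm{x-\bar c}\le \max_{y,z\in K(x_j)}\norm{y-z}\cdot$(something)\,; more directly, $\bar c$ lies in the convex hull of $K(x_j)\subseteq B(x_j,2\OPT)$, and the smallest enclosing ball of a set has radius at most that of any ball containing it, so $\max_{x\in K(x_j)}\norm{x-\bar c}\le \max_{x\in K(x_j)}\norm{x-x_j}\le 2\OPT$ once we note the centroid cannot do worse than the center $x_j$ for the max-distance objective over the same point set — actually the cleanest bound is $\norm{x-\bar c}\le 2\OPT$ because $\bar c\in B(x_j,2\OPT)$ would only give radius $4\OPT$, so I should instead argue directly: for any $x\in K(x_j)$, $\norm{x-\bar c}=\norm{\frac{1}{n-f}\sum_{y\in K(x_j)}(x-y)}\le \frac{1}{n-f}\sum_y\norm{x-y}\le \max_{y}\norm{x-y}\le 2\,\mathrm{dist}_j\le$, hmm; the truly clean route is: $\norm{x-\bar c}\le \max_{y\in K(x_j)}\norm{x-y}$ and each such distance is $\le 2\OPT$ since $x,y\in K(x_j)$ and both are within $\OPT$... no, they are within $\mathrm{dist}_j\le 2\OPT$ of $x_j$. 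So I will simply use that the output ball has radius $\le$ diameter of $K(x_j)\le 2\,\mathrm{dist}_j\le 4\OPT$ unless I center the analysis on the fact that the \emph{reported} cost of the algorithm is $\mathrm{dist}_j$ itself, and the approximation guarantee is about that reported value — I expect the paper intends $\mathrm{dist}_j\le 2\OPT$ to be the statement, with the final centroid only used as the aggregator. I would therefore state the $2$-approximation in terms of $\mathrm{dist}_j$, the cost certified by the algorithm.

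For \textbf{MeanwO}: here I would use that for any point set $S$ and any center $c$, $\sum_{x\in S}\norm{x-c}^2 = \sum_{x\in S}\norm{x-\mathrm{cm}(S)}^2 + \sz{S}\norm{c-\mathrm{cm}(S)}^2$, so the centroid is optimal over $S$ and replacing $x_j$ by $\mathrm{cm}(K(x_j))$ only decreases cost. To bound $\mathrm{dist}_p$: for $p\in S^\star$ a data point, $\sum_{x\in K(p)}\norm{x-p}^2 \le \sum_{x\in S^\star}\norm{x-p}^2$ (since $K(p)$ is the best $n-f$-subset for center $p$, and $S^\star$ has size $n-f$), and then $\sum_{x\in S^\star}\norm{x-p}^2 \le 2\sum_{x\in S^\star}\norm{x-c^\star}^2 + 2\sz{S^\star}\norm{p-c^\star}^2 \le 2\OPT + 2\OPT = 4\OPT$ using $\norm{p-c^\star}^2\le \frac{1}{\sz{S^\star}}\sum_{x\in S^\star}\norm{x-c^\star}^2 \le \OPT/\sz{S^\star}$ only if $p$ is the closest inlier; in general $\norm{p-c^\star}^2\le \sum_{x\in S^\star}\norm{x-c^\star}^2=\OPT$. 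That gives a $4$-bound, so to reach $2$ I would instead choose $p$ to be the inlier \emph{closest} to $c^\star$, whence $\sz{S^\star}\norm{p-c^\star}^2\le \sum_{x\in S^\star}\norm{x-c^\star}^2=\OPT$ and $2\OPT+2\cdot(\text{that})\le 2\OPT+2\OPT$; still $4$. The sharper trick: pick $p$ minimizing $\norm{p-c^\star}$ among $S^\star$; then using $\sum_{x\in S^\star}\norm{x-p}^2 = \sum_{x\in S^\star}\norm{x-\mathrm{cm}(S^\star)}^2 + \sz{S^\star}\norm{p-\mathrm{cm}(S^\star)}^2$ and bounding the first term by $\OPT$ and the second by $\OPT$ (since $\norm{p-\mathrm{cm}(S^\star)}^2\le \frac{1}{\sz{S^\star}}\sum\norm{x-\mathrm{cm}(S^\star)}^2\le \OPT/\sz{S^\star}$, as $p$ is closest to $c^\star$, hence within the average distance to the centroid — this needs a short argument) gives $\mathrm{dist}_p\le 2\OPT$.

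\textbf{Main obstacle.} The delicate point — and the step I expect to require the most care — is the MeanwO bound: getting the constant down to exactly $2$ rather than $4$ hinges on choosing the right data-point candidate $p$ (the inlier nearest the continuous optimum or nearest the centroid of $S^\star$) and on the inequality $\sz{S^\star}\,\norm{p-\mathrm{cm}(S^\star)}^2\le \sum_{x\in S^\star}\norm{x-\mathrm{cm}(S^\star)}^2$, i.e. that the minimum squared distance from a point of $S^\star$ to $\mathrm{cm}(S^\star)$ is at most the average squared distance. Combined with the Huygens/parallel-axis identity and the fact that $K(p)$ is by construction no worse than $S^\star$ for center $p$, this yields $2$-approximation; I would present this identity, the candidate choice, and then a two-line chain of inequalities, and separately give the (easier) triangle-inequality argument for CenterwO.
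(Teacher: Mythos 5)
Your proof is correct and takes essentially the same route as the paper's: for MeanwO the paper runs the identical chain (algorithm cost $\le$ best-medoid cost on $K(x_j)$ $\le$ best-medoid cost on the optimal inlier set $\le 2\times$ its centroid cost), invoking the medoid-vs-centroid factor-$2$ bound as Proposition~3 of \cite{BOR21} where you re-derive it via the parallel-axis identity together with the min-$\le$-average choice of candidate point. For CenterwO the paper simply defers to \cite{She13}, and your triangle-inequality argument (every optimal inlier lies within $2\,\OPT$ of any data point of the optimal cluster, hence $\mathrm{dist}_j\le 2\,\OPT$, with the approximation guarantee read off the certified ball $B(x_j,\mathrm{dist}_j)$ rather than the returned centroid) is exactly the standard proof and matches how the paper later uses the guarantee, namely via $r'\le 2r^*$.
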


The CenterwO algorithm was mentioned in many previous work, a proof for its approximation guarantee was given in~\cite{She13}. We prove the $2$-approximation of MeanwO in the technical appendix. The running time of the CenterwO/MeanwO algorithm is $\calO(n^2 d)$, and the memory usage is $\calO(nd)$.

We note that the approximation algorithms for the considered problems can be improved in approximation ratio, running time, and memory usage. Examples of such improvements for the one center with outliers problem include a deterministic $\calO(1)$-approximation algroithm with $\calO(nd)$ running time~\cite{Shy18} and a deterministic $\calO(1)$-approximation streaming algorithm using $\calO(fd)$ memory~\cite{MK08}. Nevertheless, we adopt Algorithm~\ref{alg:ClusterwO} in our analysis and experiments.




\subsection{Analysis of the Algorithms}

The robust properties of the CenterwO and MeanwO aggregation rules are summarized in Theorem~\ref{thm:Clustering}, the proof of which is shown in the technical appendix.

\begin{theorem}
\label{thm:Clustering}
    For any $f< n/2$, $f/n\leq \delta_{\max} <1/2$, and $\nu \defeq 1/2-\delta_{\max}$, the {\rm{CenterwO}} algorithm is 
    \begin{itemize}
    \item $\kh{f, \frac{(2\sqrt{2}+1)f}{n-f}}$-resilient averaging, 
    \item $\kh{\delta_{\max}, \frac{(18+8\sqrt{2})}{{(1+2\nu})^2} \frac{f}{n}}$-agnostic robust, 
    \item $\kh{f, \frac{8f^2+2f}{n-2f}\cdot \frac{n-f}{n-2f} }$-robust,
    \item $\kh{f, \frac{8f^2+2f}{n-2f}\cdot \frac{n-f}{n-2f} }$-robust averaging;
    \end{itemize}
    the {\rm{MeanwO}} algorithm is 
    \begin{itemize}
    \item $\kh{f, \frac{\sqrt{3f(n-f)}}{(n-2f)}}$-resilient averaging,
    \item $\kh{\delta_{\max}, \frac{3(1+2\nu)}{8\nu^2}}$-agnostic robust,
    \item $\kh{f, \frac{6f}{n-2f} \cdot \frac{n-f}{n-2f} }$-robust,
    \item $\kh{f, \frac{6f\cdot \min\{n-f, d\}}{n-2f} \cdot \frac{n-f}{n-2f} }$-robust averaging.
    \end{itemize}
\end{theorem}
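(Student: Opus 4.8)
The plan is to derive all eight bounds from a small set of structural facts about the set $K := K(x_j)$ selected by the algorithm (with $j \in \argmin_i \mathrm{dist}_i$), writing $s := n-f = |K| = |S|$ and $\mathrm{Var}(S) := \tfrac1s\sum_{i\in S}\|x_i - \overline x_S\|^2$. The reusable ingredients are: (i) \emph{Overlap} — two size-$s$ subsets of an $n$-set meet in $\ge n-2f \ge 1$ points, so $t := |K\setminus S| = |S\setminus K| \le f$; (ii) \emph{Centroid algebra} — with $P := K\setminus S$, $Q := S\setminus K$ and $F(X) = \mathrm{cm}(K)$, one has the exact identity $F(X) - \overline x_S = \tfrac ts\big(\mathrm{cm}(P) - \mathrm{cm}(Q)\big)$, so the deviation can be controlled either by the triangle inequality through a well-chosen anchor (best for diameter-type bounds) or by Jensen/Cauchy--Schwarz, e.g.\ $\|\mathrm{cm}(Q) - \overline x_S\|^2 \le \tfrac1t\sum_{z\in Q}\|z-\overline x_S\|^2 \le \tfrac st\,\mathrm{Var}(S)$ (best for variance-type bounds); (iii) \emph{Cost domination} — since $K(x_i)$ is the minimum-cost $(n-f)$-set containing $x_i$ and, for $i\in S$, the set $S$ competes, $\mathrm{dist}_j = \min_i \mathrm{dist}_i \le \min_{i\in S}\mathrm{cost}(x_i,S)$, which equals $\min_{i\in S}\max_{z\in S}\|x_i-z\|$ for CenterwO (at most $\mathrm{diam}(S)$, and via Jung's theorem at most a $\sqrt2$ multiple of the enclosing-ball radius of $S$ — the source of the $\sqrt2$'s in the constants) and $\min_{i\in S}\sum_{z\in S}\|x_i-z\|^2 = s\,\mathrm{Var}(S) + s\min_{i\in S}\|x_i-\overline x_S\|^2 \le 2s\,\mathrm{Var}(S)$ for MeanwO (a restatement of the $2$-approximation of Lemma~\ref{lem:approxRatio} for this comparison). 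Finally, since $|K\cap S|\ge n-2f$, an averaging argument produces an anchor $y\in K\cap S$ with $\|y-\overline x_S\|^2 \le \tfrac{s}{n-2f}\mathrm{Var}(S)$ (resp.\ $\|y-\overline x_S\| \le \mathrm{diam}(S)$) and, since $y\in K$, with $\|y-x_j\|$ bounded by the cost; this is what makes the sharper $(n-2f)$ rather than $(n-f)$ denominators appear.

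For CenterwO, the $(f,\lambda)$-resilient-averaging bound follows from (ii) in triangle form with the anchor $y$: $\|F(X)-\overline x_S\| \le \tfrac ts\big(2\,\mathrm{dist}_j + \mathrm{diam}(S)\big)$, and then $t\le f$ together with the bound on $\mathrm{dist}_j$ from (iii) yields $\lambda$ of the stated form $\tfrac{(2\sqrt2+1)f}{n-f}$. Squaring the same chain and invoking the anchor estimate gives $\|F(X)-\overline x_S\|^2 \le \tfrac{8f^2+2f}{(n-2f)^2}\max_{z\in S}\|z-\overline x_S\|^2$, from which the $(f,\kappa)$ and $(f,\xi)$ bounds both follow via $\max_{z\in S}\|z-\overline x_S\|^2 \le (n-f)\,\mathrm{Var}(S)$ and, using $u^\top M_S u \ge \tfrac1s\|x_k-\overline x_S\|^2$ for $u = (x_k-\overline x_S)/\|x_k-\overline x_S\|$, also $\max_{z\in S}\|z-\overline x_S\|^2 \le (n-f)\,\lambda_{\max}(M_S)$; the two constants coincide because a single coordinate's squared deviation, not the trace, governs the bound. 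For $(\delta_{\max},\zeta)$-ARAgg, take expectations, use $\mathbb E[\mathrm{Var}(S)] = \tfrac1{2s^2}\sum_{i,j\in S}\mathbb E\|x_i-x_j\|^2 \le \tfrac12\rho^2$, and substitute $n-f\ge\tfrac n2(1+2\nu)$ and $n-2f\ge 2\nu n$ (valid since $f\le\delta_{\max}n$ and $1+2\nu = 2-2\delta_{\max}$).

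For MeanwO, the core estimate is the single inequality $\|F(X)-\overline x_S\|^2 \le \tfrac{6f(n-f)}{(n-2f)^2}\,\mathrm{Var}(S)$: starting from $\|F(X)-\overline x_S\|^2 = \tfrac{t^2}{s^2}\|\mathrm{cm}(P)-\mathrm{cm}(Q)\|^2$, bound $\|\mathrm{cm}(Q)-\overline x_S\|^2 \le \tfrac st\,\mathrm{Var}(S)$, $\|\mathrm{cm}(P)-x_j\|^2 \le \tfrac1t\sum_{z\in P}\|z-x_j\|^2 \le \tfrac1t\,\mathrm{dist}_j \le \tfrac{2s}{t}\mathrm{Var}(S)$ by (iii), and $\|x_j-\overline x_S\|^2$ through the anchor $y$; collecting the $\tfrac ts$ factors (so $\tfrac{t^2}{s^2}\cdot\tfrac st = \tfrac ts$), using $t\le f$, and discharging the cross term with $2\sqrt{f(n-2f)}\le n-f$ produces the constant $6$. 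The other three MeanwO assertions are corollaries: $(f,\kappa)$ is the core estimate; $(f,\lambda)$ follows from $\mathrm{Var}(S)\le\tfrac12\mathrm{diam}(S)^2$, giving $\lambda=\tfrac{\sqrt{3f(n-f)}}{n-2f}$; $(f,\xi)$ follows from $\mathrm{Var}(S)=\mathrm{trace}(M_S)\le\min\{d,n-f\}\,\lambda_{\max}(M_S)$ since $\mathrm{rank}(M_S)\le\min\{d,|S|\}$; and ARAgg follows from $\mathbb E[\mathrm{Var}(S)]\le\tfrac12\rho^2$ together with $\tfrac{n(n-f)}{(n-2f)^2} = \tfrac{n^2}{2(n-2f)^2}+\tfrac{n}{2(n-2f)} \le \tfrac1{8\nu^2}+\tfrac1{4\nu} = \tfrac{1+2\nu}{8\nu^2}$.

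The conceptual steps above are elementary; the real work — and the main obstacle — is the constant-tracking, most delicately in the ARAgg bound for CenterwO, where the straightforward resilient-averaging chain, squared, loses an extra factor of $f$ and would give $\zeta = O(f)$ rather than the stated $O(1/(1+2\nu)^2)$. Recovering the correct order requires exploiting that a large swap count $t\approx f$ and a large $\mathrm{dist}_j$ cannot coexist unless $\mathrm{Var}(S)$ is itself at least proportionally large, i.e.\ the two adverse quantities trade off against one another. A secondary point of care is the consistent use of the anchor $y\in K\cap S$ so that the $(n-2f)$ denominators — which correctly capture the blow-up as $f\to n/2$ — appear in place of the weaker $(n-f)$ ones.
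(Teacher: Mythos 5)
Your overall architecture is the same as the paper's: decompose $F(X)-\overline{x}_S$ over the symmetric difference of $K$ and $S$, use $\lvert K\cap S\rvert\ge n-2f$ to obtain an anchor, dominate the cost of the selected cluster by $\mathrm{cost}(\cdot,S)$, invoke Jung's theorem for the $\sqrt2$'s, and derive MeanwO's remaining three properties from its $(f,\kappa)$-robustness. Your first bullet (the $(2\sqrt2+1)f/(n-f)$ resilient-averaging bound) and your explicit derivations of MeanwO's $\lambda$, $\zeta$ and $\xi$ from its $\kappa$ are correct and reproduce the paper's constants exactly; the paper merely cites Propositions 8--9 of Allouah et al.\ for those implications, so spelling out the algebra (in particular $\tfrac{n(n-f)}{(n-2f)^2}\le\tfrac{1+2\nu}{8\nu^2}$) is a small improvement.

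The gap is in the two core squared estimates. For CenterwO, squaring the triangle chain $\norm{F(X)-\overline{x}_S}\le\tfrac{t}{s}(2\,\mathrm{dist}_j+\mathrm{diam}(S))$ yields a coefficient of order $36f^2/(n-f)^2$ against $\max_{z\in S}\norm{z-\overline{x}_S}^2$, which is \emph{not} bounded by the claimed $(8f^2+2f)/(n-2f)^2$ (already for $f=1$ and $n$ large this is $36/n^2$ versus $10/n^2$). The additive structure $2f(4f+1)$ cannot come from squaring a single triangle inequality; the paper obtains it from the identity $(n-f-t)(\overline{x}_K-\overline{x}_S)=\sum_{i\in P}(x_i-\overline{x}_K)-\sum_{i\in Q}(x_i-\overline{x}_S)$ followed by Cauchy--Schwarz over the $2t\le 2f$ summands, which also produces the $(n-2f)^2$ denominator without any anchor point. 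For MeanwO the discrepancy is real as well: carrying out your chain ($\mathrm{cm}(P)\to x_j\to y\to\overline{x}_S$ plus the $\mathrm{cm}(Q)$ term) and discharging the cross term exactly as you indicate gives a coefficient of roughly $13\,f(n-f)/(n-2f)^2$, which at $f=n/4$ exceeds the claimed $6\,f(n-f)/(n-2f)^2$ by about $30\%$. The missing idea is to avoid per-point anchoring altogether: after the Cauchy--Schwarz step, enlarge $\sum_{i\in P}\norm{x_i-\overline{x}_K}^2$ to the full cluster cost $\sum_{i\in K}\norm{x_i-\overline{x}_K}^2\le\sum_{i\in K}\norm{x_i-x_j}^2$, then apply the $2$-approximation together with the optimality of the $1$-mean-with-outliers objective (whose value is at most $\sum_{i\in S}\norm{x_i-\overline{x}_S}^2$ since $S$ is a feasible cluster); this delivers the factor $2f(2+1)=6f$ in one stroke. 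Finally, your stated ``main obstacle'' for CenterwO's ARAgg bound rests on a misreading of the target: the theorem's $\zeta$ there is itself $\Theta(f/n)$ (the paper's remark calls CenterwO the first $(\delta_{\max},\mathcal{O}(f/n))$-agnostic robust aggregator), so no trade-off between the swap count and $\mathrm{dist}_j$ is needed --- squaring the deterministic resilient-averaging bound and using $f/n\le 1/2$ together with $n-f\ge n(1+2\nu)/2$ already gives $\tfrac{18+8\sqrt2}{(1+2\nu)^2}\tfrac{f}{n}$.
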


\begin{remark}
    The $(f, \lambda)$-resilient aggregating propoerty of CenterwO matches the lower bound 
    $\lambda \geq f/(n-f)$ 
    shown in~\cite{FGGPS22}, up to a constant factor. 
    The only previously known aggregator that matches this bound is MDA~\cite{EGR18}, which runs in $\calO(\genfrac(){0pt}{2}{n}{f}+ n^2d)$ time. The CenterwO algorithm is much more efficient with moderate $n$ and $f$. 
\end{remark}

\begin{remark}
    CenterwO algorithm is the first known $(\delta_{\max}, \calO(f/n))$-agnostic robust aggregator. MeanwO is a $(\delta_{\max}, \calO(1))$-agnostic robust aggregator, which matches the previously known best $\calO(1)$ results achieved by CClip (not agnostic),  Bucketing+Krum, and Bucketing+GM~\cite{KHJ22}.
\end{remark}

\begin{remark}
    The $(f, \kappa)$-robust aggregating propoerty of CenterwO matches the lower bound 
    $\kappa \geq f/(n-2f)$ 
    shown in~\cite{AFGGPS23} up to a constant factor, given that $\nu$ is a constant. The CWTM algorithm matches this bound without pre-aggregation steps, so does the SMEA algorithm~\cite{AGGPS23}. With the mixing pre-aggregation step, NNM+(GM/CWMed/CWTM/Krum)~\cite{AFGGPS23} matches this bound.
\end{remark}

Although CWTM also achieves near-optimal $(f,\kappa)$-robustness, it removes $2f$ values in each coordinate, and therefore utilizes less information in the data, making the training algorithm potentially inefficient. Pre-aggregation steps can be applied prior to any aggregation rule, at the cost of additional processing time at the server end. SMEA and other eigenvector-based algorithms~\cite{AGGPS23,DD21,ZWPWJ+23} are generally more expensive, with $\Omega(\genfrac(){0pt}{2}{n}{f}n^2d)$ or $\Omega(n d^2)$ running time. 


\section{The Two-Phase Aggregation Framework}
In this section, we propose a 2-phase aggregation framework motivated by a dilemma arising from a thought experiment.
\subsection{Two Contradicting Types of Attacks}
The Byzantine clients and the aggregation rule play a zero sum game to decide the model update in each round of the learning process. We provide a simple example to show that no fixed aggregation rule outperforms other strategies under \emph{all} attacker strategies.

\begin{figure}
     \centering
     \begin{subfigure}{0.2\textwidth}
         \includegraphics[width=\textwidth]{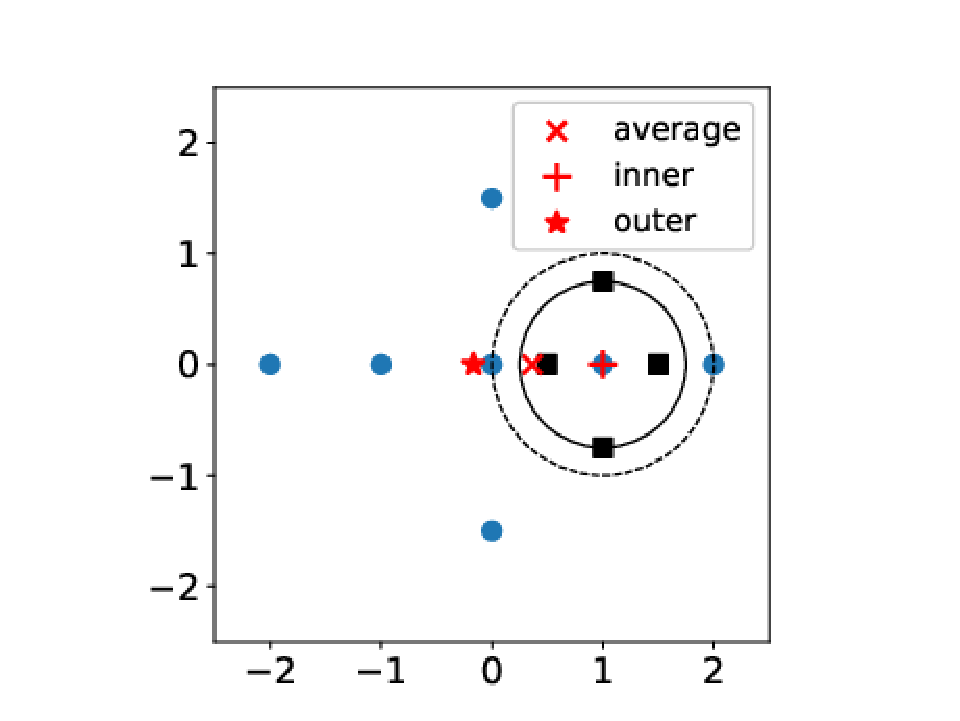}
         \caption{sneak attack}
         \label{fig:toyInAttack}
     \end{subfigure}\!\!
     \begin{subfigure}{0.2\textwidth}
         \includegraphics[width=\textwidth]{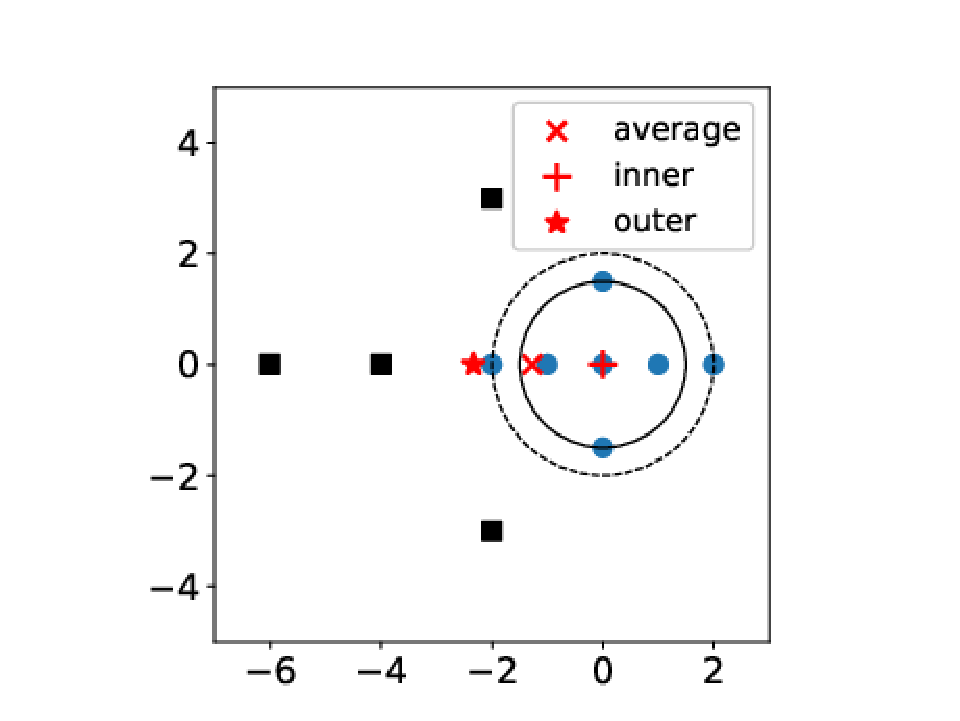}
         \caption{siege attack}
         \label{fig:toyOutAttack}
     \end{subfigure}
 \caption{Two types of attacks produce a dilemma for any robust aggregation rule. Blue circles are update vectors produced by honest clients. Black squares show update vectors provided by Byzantine clients. Aggregated values of the naive averaging rule and 1-center/mean with outliers rules (inner and outer averaging) are shown in~(\ref{fig:toyInAttack}) and (\ref{fig:toyOutAttack}).}
\vspace{-4.5mm}
\end{figure}

Suppose $n=11$ and $f=5$, and there are indeed $7$ honest clients and $4$ Byzantine clients. In Figure (\ref{fig:toyInAttack}) and Figure (\ref{fig:toyOutAttack}) the blue circles show $7$ update vectors produced by honest clients. We discuss two types of attacks to shift the average vector from the true mean $(0,0)$. In Figure (\ref{fig:toyInAttack}), the attacker shift the average by placing $4$ biased update vectors within the convex hull of the blue points. In Figure (\ref{fig:toyOutAttack}), the attacker place $4$ biased update vectors around the update vectors of honest clients. We call the first type of attack the \emph{sneak attack} and the second type the \emph{siege attack}. 

If we remove all labels of the vectors, the two cases are \emph{indistinguishable} for \emph{all} aggregation algorithms without knowing the range of honest update values or their variance. Furthermore, any algorithm outperforms the naive average algorithm in the first case produces larger bias than the average algorithm in the second case, and vice versa. Therefore no aggregation rule dominates other rules under all attacks.

Although Figure (\ref{fig:toyInAttack}) and (\ref{fig:toyOutAttack}) are artificially constructed examples, we claim that the issue exists in real update vector filtering problems. Real attacks can roughly be categorized into these two types, depending on whether the Byzantine vectors are placed within or out of the distribution of the update vectors of honest clients.

Most of the existing methods focus on preventing the siege attack by reducing the impact of outliers. This is because the siege attack can add unbounded bias to the model update, while the bias added by the sneak attack is restricted by the variance of the honest client updates. 
However, when the data distribution among clients are heterogeneous, the honest updates are not identically distributed. 
 In this case the guarantees given by the optimization algorithms heavily depend on the smoothness of the loss function, which may not hold in deep models used in practice.

Given the information of the type of attack used, we can use 1-center/means clustering with outliers algorithm to remove the faulty update vectors effectively. In Figure~(\ref{fig:toyInAttack}), an outer-cluster averaging rule identifies an optimal cluster of $f$ points, and returns the average of the vectors \emph{outside} the cluster. In Figure~(\ref{fig:toyOutAttack}), an inner-cluster averaging rule identifies an optimal cluster of $(n-f)$ points, and returns the average of the vectors \emph{inside} the cluster. As the figures show, if both aggregation rules are used, one of the rules returns a value close to the true average. It remains a problem to choose one from the two aggregated values.

\subsection{The 2PRASHB Algorithm}
We propose a two-phase aggregation framework to let the clients elect a model from two candidate models in each round. We note that if we let each client to directly adopt one model from the two proposed ones, it will result in multiple models in the system\footnote{which is called a split brain state in distributed systems.}. 
An instantiation of the framework is given in Algorithm~\ref{alg:skeleton2p}. The algorithm is called the two-phase aggregation stochastic heavy ball algorithm (2PRASHB). In the first phase, the \emph{prepare} phase, 1) the server broadcasts the current model; 2) the workers send their local updates to the server; 3) the server then proposes two models using the Inner and Outer aggregation algorithm\footnote{See appendix for details.}. 
In the second phase, the \emph{voting} phase, 1) each honest client then samples a new batch of data to evaluate the loss of the two proposed models, and the honest clients then send their votes to the server\footnote{Byzantine workers can send any message or send nothing.}; 2) the server acknowledges the winning model. 

We note that the server is not allowed to propose more than two candidate models for the clients to choose. This is because if there are more than two choices, vote splitting can happen between models with similar losses, and the Byzantine clients may win the election with $f<n/2$.

\begin{algorithm}[tb]
\caption{2PRASHB}
\label{alg:skeleton2p}
\textbf{Initialization}: The same as Algorithm~\ref{alg:skeleton}.\par
\begin{algorithmic}[1] 
\FOR{$t= 0, \ldots, T-1$}
\STATE {\bf{Server}} broadcasts $\theta_t$ to all workers;
\STATE Clients calculate and send momentums $\{m_t^{(i)}\}_{i=1}^n$ by executing line 2-7 in Algorithm~\ref{alg:skeleton};
\STATE Server calculates two values from the received momentums:
\begin{align}
R_t &= \mathrm{Inner}(\{m_t^{(1)}, \ldots, m_t^{(n)}\}),\\
Q_t &= \mathrm{Outer}(\{m_t^{(1)}, \ldots, m_t^{(n)}\}).
\end{align}
\STATE Server proposes two updated models with parameters: $\widetilde{\theta}_t = \theta_{t-1} - \gamma_t R_t$ and $\widehat{\theta}_{t} = \theta_{t-1} - \gamma_t Q_t$, and send both $\widetilde{\theta}_t$ and $\widehat{\theta}_t$ to all clients;
\FOR{every honest worker $w_i$, $i\in \calH$, in parallel}
\STATE Evaluate the loss of the two proposed models $\widetilde{\theta}_t$ and $\widehat{\theta}_t$ on a new mini-batch of data samples;
\STATE Choose one set of parameters with smaller loss, and send its choice to the server;
\ENDFOR
\STATE Server chooses the model which wins the popular vote (breaking ties arbitrarily), and sets it as  $\theta_t$;
\ENDFOR
\STATE \textbf{return} $\frac{1}{T}\sum_{t=0}^{T-1} \theta_{t}$;
\end{algorithmic}
\end{algorithm}

\section{Experiments}



In this section, we run simulations for an image classification task with a non-convex objective. We run four algorithms based on our study: Cent1P and Mean1P adopt the RASHB framework, using the CenterwO and MeanwO algorithm as the aggregator; Cent2P and Mean2P adopt the 2PRASHB framework. The Inner and Outer aggregators are instantiated by 1-center and 1-mean clustering with outliers. Limited by space, we leave more details about the experiments to the appendix.


\paragraph{Generation of datasets.} The FEMNIST dataset, a standard benchmark for distributed and federated learning, is constructed by partitioning data in the EMNIST~\cite{CATV17} dataset. We sample $5\%$ of the images in the original dataset to construct our datasets. The FEMNIST dataset has 805,263 images under 62 classes.For the homogeneous setting, each client sample images from a uniform distribution over 62 classes. We generate heterogeneous datasets for clients using categorical distributions $\qq$ drawn from a Dirichlet distribution $\qq \sim \mathtt{Dir}(\alpha \pp)$, where $\pp$ is a prior class distribution over 62 classes~\cite{HQB19}. Each client sample from a categorical distribution characterized by an independent $\qq$. In our experiment for the heterogeneous setting, we let $\alpha = 0.1$, which is described as the \emph{extreme heterogeneity} setting in~\cite{AFGGPS23}. For each worker, the training set and testing set are sampled independently from a distribution characterized by the same $\qq$. Due to space limitations, similar results for CIFAR10~\cite{Krizhevsky2009LearningML} are deferred to appendix. 

\paragraph{Adversarial attacks.} 
We run experiments for $3$ levels of adversarial rates: $0.1$, $0.2$, and $0.4$, i.e.\ $3$, $7$, and $14$ out of $35$ clients are corrupted. The honest workers are always honest during the learning process. The Byzantine workers send corrupted update vectors to the server, and vote for the model with larger loss in the voting phase if the two-phase framework is applied. We simulate $6$ commonly studied adversarial attacks: the label flipping attack \textbf{LF}, the sign flipping attack \textbf{SF}, the random Gaussian attack \textbf{Gauss}, the omniscient attack \textbf{Omn}~\cite{NIPS2017_f4b9ec30}, the fall of empire attack \textbf{Empire} ~\cite{XKG20}, and the scaled variance attack \textbf{SV}~\cite{NEURIPS2019_ec1c5914, AELA21}.We also customize a more sophisticated attack tailored to aggregation rules, PGA algorithm~\cite{SHKR22}, to attack various algorithms. See appendix for complementary description about all attack algorithms.


\paragraph{Baselines.}  We compare the proposed algorithms with $6$ baseline aggregation rules: the naive average (Avg), Geometric Median (GM) approximated by the Weiszfeld's algorithm~\cite{PKH22} with the $1$ iteration, Centered Clipping (CClip)~\cite{KHJ21} with hyperparameters $\vv =\mathbf{0}$ and $\tau=0.215771$, Coordinate-Wise Median (CWM)~\cite{YCKB18}, Coordiante-Wise Trimmed Mean (CWTM)~\cite{YCKB18}, and Krum~\cite{NIPS2017_f4b9ec30}. We apply all the baseline aggregators to the RASHB framework.

\paragraph{Architecture and hyperparameters.} 
In our study, we employ a Convolutional Neural Network (CNN) comprising two convolutional layers (see appendix for details). 
with a learning rate of 0.1 and momentum of 0. The training process is carried out over 1500 rounds with a batch size of 3. We run all models with different aggregation rules under each attack five times, each with different random seeds. Finally, we report the averages of performance across these runs\footnote{Standard deviations across runs are shown in the appendix.}. 
The implementation is based on RFA\footnote{\url{https://github.com/krishnap25/tRFA}} and MEBwO\footnote{\url{https://github.com/tomholmes19/Minimum-Enclosing-Balls-with-Outliers}}.

\paragraph{Experimental results.} 
Table \ref{tab:iid} shows the performance of different aggregation algorithms and adversarial attacks on the uniform sampling dataset at an adversarial rate of $0.4$. The results demonstrate the robustness of Cent2P and Mean2P, as they consistently achieved the highest worst performance across different attack scenarios. Specifically, Cent2P achieved a minimum accuracy of 0.62, while Mean2P achieved 0.61, both outperforming the third method, CClip, with a minimum accuracy of 0.20. Furthermore, only Cent2P and Mean2P achieved an accuracy above 0.75, whereas the accuracy of all the other methods remained below 0.20 and 0.32 under the Omn and SV attack scenarios respectively. These results underscore the remarkable effectiveness of Cent2P and Mean2P in maintaining strong performance, even when confronted with challenging adversarial conditions.

\begin{table}[ht]
    \centering
    \caption{Performance comparison on the uniform sampling datasets at an adversarial rate of $0.4$.}
    \label{tab:iid}
    \scalebox{0.75}{
    \begin{tabular}{ccccccccccccccccccccc}
        \toprule
        Aggregation & LF & SF & Gauss & Omn & Empire & SV & Worst \\
        \midrule
        Avg & 0.56 & 0.04 & 0.79 \cellcolor{lightGreen}& 0.00 & 0.79 \cellcolor{lightGreen}& 0.32 & 0.00 \\
        GM & 0.64 & 0.46  & 0.79 \cellcolor{lightGreen}& 0.00 & 0.74 & 0.07 & 0.00\\
        CClip & 0.58 & 0.45 & 0.64 & 0.20 & 0.26 & 0.26 & 0.20\\
        CWM & 0.50 & 0.45 & 0.54 & 0.02 & 0.07 & 0.05 & 0.02\\
        CWTM & 0.51 & 0.39 & 0.55 & 0.02 & 0.05 & 0.05 & 0.02\\
        Krum & 0.53 & 0.36 & 0.47 & 0.10 & 0.01 & 0.05 & 0.01\\
        Cent2P & 0.74 \cellcolor{lightGreen}& 0.62 \cellcolor{lightGreen}& 0.76 & 0.79 \cellcolor{lightGreen}& 0.74 & 0.76 \cellcolor{lightGreen}& 0.62\cellcolor{lightGreen}\\
        Mean2P & 0.73 & 0.61 & 0.76 & 0.79 \cellcolor{lightGreen}& 0.73 & 0.75 & 0.61\\
        \bottomrule
    \end{tabular}
    }
\end{table}
\vspace{-2mm}
\begin{figure}[ht]
\setlength{\abovecaptionskip}{2mm}
\vspace{-2mm}
     \centering
\includegraphics[width=0.45\textwidth]{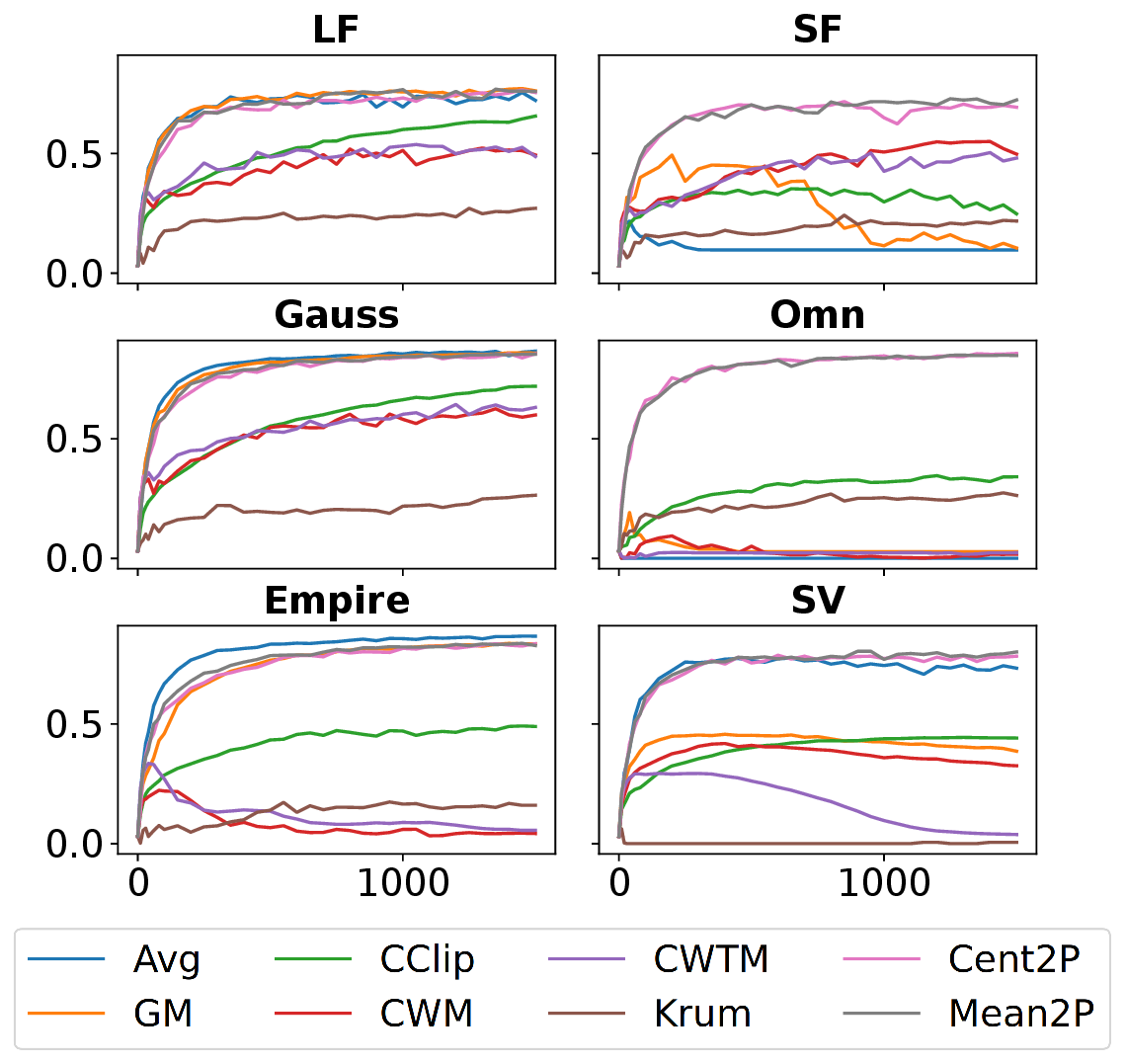}
 \caption{Performance comparison on the heterogeneous datasets at an adversarial rate of $0.2$, with the x, y axis representing testing accuracy and step number, respectively.}
 \label{fig:niidResult}
\vspace{-4mm}
\end{figure}
Figure~\ref{fig:niidResult} illustrates the testing accuracy on the heterogeneous datasets at an adversarial rate of $0.2$. The results highlight that when subjected to the SF and Omn attacks, Cent2P and Mean2P exhibit significant advantages over all other aggregation rules. When facing the SV attack, Cent2P and Mean2P display a slight advantage over Avg and much outperform the remaining baseline methods. Conversely, Avg achieves the best performance under the Empire attack, while certain resilient aggregation rules experience significant degradation. Cent2P, Mean2P, and GM yield comparable results to Avg. In the presence of LF and Gauss attacks, Cent2P, Mean2P, Avg, and GM stand out as having the highest testing accuracy.

\section{Conclusion and Future Work}

We have proposed a near-optimal resilient aggregation framework based on  1-center and 1-mean clustering with outliers. Approximation algorithms have been applied to achieve both efficiency and resilience/robustness. We have proven safety guarantees provided by the proposed algorithms. We have proposed a two-phase resilient aggregation framework based on the observation that no single aggregation rule outperforms other rules against two contradicting types of attacks. We have shown the advantages of the proposed approaches by running numerical simulations for image classification with non-convex loss in the homogeneous and heterogeneous settings. 
Future work may study the resilience of other outlier-robust clustering methods, and the theoretical guarantee of the 2PRASHB framework. 
\section*{Acknowledgments}

Yuhao Yi and Hong Liu are supported by the National Natural Science Fundation of China (Grant No. 62303338) and the Fundamental Research Funds for the Central Universities, under grant Sichuan University YJ202285. This work is supported by the Key Program of National Science Foundation of China (Grant No. 61836006) and Engineering Research Center of Machine Learning and Industry Intelligence (Ministry of Education), Chengdu 610065, P. R. China. This research is also supported by an NSERC Postdoctoral Fellowship.
\clearpage


\clearpage

\onecolumn
\section*{Organization of the Appendix}
Appendix~\ref{appdxSec:proofApprox} provides a proof of the 2-approximation of the MeanwO algroithm. Appendix~\ref{appdxSec:proofMain} contains a proof of Theorem 2, which provides robust guarantees for CenterwO and MeanwO. Appendix~\ref{appdxSec:twophase} explains details for the Inner and Outer algorithm in the 2PRASHB framework. Appendix~\ref{appdxSec:experiments} contains details about the running environment of the experiments, description of the adversarial attacks, all experimental results, and an ablation study of the two-phase framework.
\appendix
\section{Proof of the 2-Approximation of MeanwO}
\label{appdxSec:proofApprox}
We recall the following proposition.
\begin{proposition}[Propostion 3 in~\cite{BOR21}]
\label{prop:madoidMean}
Let $X$ be a finite set of points in $\mathbb{R}^d$, then
\begin{align}
    \min_{y\in X} \sum_{x\in X} \norm{x-y}^2 \leq 2 \sum_{x\in X}\norm{x-\mathrm{cm}(X)}^2\,.
\end{align}
\end{proposition}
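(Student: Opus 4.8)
The plan is to derive Proposition~\ref{prop:madoidMean} from the classical ``parallel axis'' (bias--variance) identity together with an averaging argument over the candidate medoids; this is exactly the ingredient needed afterwards to bound the cost of \textrm{MeanwO} in Lemma~\ref{lem:approxRatio}.

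First I would record the key identity: writing $\mu := \mathrm{cm}(X)$ and $n := \sz{X}$, for \emph{every} point $y \in \mathbb{R}^d$ one has
\[
  \sum_{x\in X}\norm{x-y}^2 \;=\; \sum_{x\in X}\norm{x-\mu}^2 \;+\; n\,\norm{\mu-y}^2 .
\]
This follows by expanding $\norm{x-y}^2 = \norm{x-\mu}^2 + 2\langle x-\mu,\;\mu-y\rangle + \norm{\mu-y}^2$ and summing over $x\in X$: the cross term vanishes because $\sum_{x\in X}(x-\mu)=\0$ by the definition of the centroid, and the last summand contributes $n\norm{\mu-y}^2$ since it is independent of $x$.

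Second, I would average this identity over $y$ ranging over the finite point set $X$ itself. Since $\frac1n\sum_{y\in X}\norm{\mu-y}^2 = \frac1n\sum_{y\in X}\norm{y-\mu}^2$, the right-hand side becomes $\sum_{x\in X}\norm{x-\mu}^2 + \sum_{y\in X}\norm{y-\mu}^2 = 2\sum_{x\in X}\norm{x-\mu}^2$, i.e.
\[
  \frac1n\sum_{y\in X}\;\sum_{x\in X}\norm{x-y}^2 \;=\; 2\sum_{x\in X}\norm{x-\mathrm{cm}(X)}^2 .
\]
Finally, the minimum of a finite family of nonnegative reals never exceeds their average, so $\min_{y\in X}\sum_{x\in X}\norm{x-y}^2$ is at most the left-hand side above, which is precisely the claimed bound.

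I do not expect a genuine obstacle here: the whole argument rests only on the first-order optimality of the centroid (the vanishing cross term) and on the trivial ``$\min \le$ average'' inequality. If a probabilistic phrasing is preferred, one can instead draw $y$ uniformly at random from $X$ and take expectations of the identity, obtaining $\expec{y}{\sum_{x\in X}\norm{x-y}^2} = 2\sum_{x\in X}\norm{x-\mu}^2$ and concluding that some $y\in X$ attains a value at most this expectation.
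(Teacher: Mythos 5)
Your proof is correct. Note that the paper does not actually prove this statement itself --- it is imported verbatim as Proposition~3 of the cited reference \cite{BOR21} and used as a black box in the proof of Lemma~\ref{lem:approxRatio} --- but your argument (the bias--variance identity $\sum_{x\in X}\norm{x-y}^2=\sum_{x\in X}\norm{x-\mathrm{cm}(X)}^2+\sz{X}\,\norm{\mathrm{cm}(X)-y}^2$, averaged over $y\in X$, followed by ``minimum $\le$ average'') is the standard and essentially canonical derivation of this bound, and every step checks out.
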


\begin{proof}[Proof of Lemma~1]
    We recall the notation $K(c)$, which represents the set of $(n-f)$ data points in $X$ closest to the point $c$ (breaking ties arbitrarily). Let $\widehat{c}$ be the optimal cluster center for the problem of 1-mean clustering with outliers, and $\widehat{S}:=\{i \mid x_i\in K(\widehat{c})\}$. We denote by
    \begin{align*}
        j\in  \argmin_{j\in \widehat{S}} \sum_{i\in\widehat{S}}\norm{x_i-x_j}^2\,,
    \end{align*}
    then by Proposition~\ref{prop:madoidMean},
    \begin{align}
    \label{eqn:twoApxMean}
    \sum_{i\in \widehat{S}}\norm{x_i-x_j}^2 \leq 2 \sum_{i\in \widehat{S}}\norm{x_i-\overline{x}_{\widehat{S}}}^2\,.
    \end{align}
    By the definition of $K(x_j)$ we attain  
    \begin{align}
    \label{eqn:NearestNbr}
        \sum_{i\in K(x_j)}\norm{x_i-x_j}^2 \leq \sum_{i\in\widehat{S}}\norm{x_i - x_j}^2\,.
    \end{align}
    The MeanwO algorithm chooses a data point $x_k$ as the cluster center (medoid) such that
    \begin{align}
    \label{eqn:MeanwOOutput}
        x_k\in \argmin_{x_i\in X}\sum_{\ell\in K(x_j)}\norm{x_\ell - x_i}^2\,.
    \end{align}
    Then we attain
    \begin{align}
        &\sum_{i\in K(x_k)}\norm{x_i-x_k}^2 \nonumber\\
        &\leq \sum_{i\in K(x_j)}\norm{x_i-x_j}^2 \qquad\qquad&\text{by applying (\ref{eqn:MeanwOOutput})}\nonumber\\
        &\leq \sum_{i\in\widehat{S}}\norm{x_i - x_j}^2 \qquad\qquad&\text{by applying (\ref{eqn:NearestNbr})}\nonumber\\
        &\leq 2 \sum_{i\in \widehat{S}}\norm{x_i-\overline{x}_{\widehat{S}}}^2\,.
        \label{eqn:approximation}
    \end{align}
    The last inequality is due to (\ref{eqn:twoApxMean}).
    Note that (\ref{eqn:approximation}) is exactly the definition of $2$-approximation.
\end{proof}

\section{Proof of Robust Properties of CenterwO and MeanwO}
\label{appdxSec:proofMain}
\begin{proof}[Proof of Theorem 2]
    Given a set of $n$ vectors $X$ in $\mathbb{R}^d$,
    let $B(c^*, r^*)$ be an optimal solution to the problem of 1-center clustering with outliers. The set of vectors covered by $B(c^*, r^*)$ is denoted as $S^*$. The {\rm{CenterwO}} finds a ball $B(c',r')$ which contains $n-f$ vectors in $X$ satisfying $r'\leq 2 r^*$. We denote by $S'$ the set of point indices based on which the returned vector is calculated, i.e.\ $S':=\{i \mid x_i\in K(c')\}$. For any subset $S\subseteq[n]$ we denote by $B(c_S, r_S)$ the minimum enclosing ball of the set of vectors $\{x_i\}_{i\in S}$. Then we attain 
    \begin{align}
        &\norm{{\rm{CenterwO}}(X)-\overline{x}_{S}}\nonumber\\
        & = \norm{\frac{1}{n-f}\sum_{i\in S'} x_i - \frac{1}{n-f}\sum_{i\in S} x_i}\nonumber\\
        & = \norm{\frac{1}{n-f}\sum_{i\in S'\backslash S} x_i - \frac{1}{n-f}\sum_{i\in S\backslash S'} x_i}\nonumber\\
        & \leq \frac{f}{n-f} \cdot \max_{i\in S\backslash S',j\in S' \backslash S} \norm{x_i-x_j}\,.\label{eqn:CenterDeviation}
    \end{align}
    The inequality follows by the fact that $\sz{S\backslash S'} = \sz{S'\backslash S} = \sz{S\cup S'}-\sz{S} = \sz{S\cup S'}-\sz{S'} \leq n - (n -f) = f$.

    We assume $f<n/2$. Since $\sz{S\cup S'}\leq n$, then $\sz{S\cap S'} = \sz{S} + \sz{S'} - \sz{S\cup S'} \geq 2(n-f) - n = n - 2f > 0$, indicating $\sz{S\cap S'}\neq \emptyset$. Let $k\in S\cap S'$. By applying the triangle inequality, we arrive at
    \begin{align}
        &\max_{i\in S\backslash S',j\in S' \backslash S} \norm{x_i-x_j} \nonumber\\
        &\leq \max_{i,k\in S}\norm{x_i-x_k} + \max_{i,k\in S'}\norm{x_j-x_k}\,. \label{eqn:detourIneq}
    \end{align}
    Because the {\rm{CenteroW}} algorithm provides a $2$-approximation to the optimum, we attain $r^*\leq r' \leq 2 r^*$.
    
    The Jung's theorem~\cite{Jun01} states that for a set of points  $X$ in a Euclidean space, let $B(c,r)$ be the minimum enclosing ball of $X$,
    \begin{align}
        r \leq \kh{\frac{d}{2d+1}}^{\frac{1}{2}} \cdot \max_{x_i,x_j\in X}\norm{x_i-x_j}\,.\nonumber
    \end{align}
    Then we attain
    \begin{align}
    \label{eqn:diamRadiasBound}
        \sqrt{2} r \leq \max_{x_i,x_j\in X}\norm{x_i-x_j} \leq 2 r\,,
    \end{align}
    in any $d$-dimensional space. The first inequality is attributed to the fact that  $(2d+1)/d > 2$. The second inequality is attained by applying the triangle inequality.

    Then we arrive at
    \begin{align}
        &\max_{i\in S\backslash S',j\in S' \backslash S} \norm{x_i-x_j} \nonumber\\
        &\leq \max_{i,j\in S}\norm{x_i-x_j} + 2 r' \qquad\text{by applying (\ref{eqn:diamRadiasBound})}\nonumber\\
        &\leq \max_{i,j\in S}\norm{x_i-x_j} + 4 r^* \qquad \text{since } r'\leq 2r^*\nonumber\\
        &\leq \max_{i,j\in S}\norm{x_i-x_j} + 4 r_S \qquad\text{optimality of $r^*$}\nonumber\\
        &\leq \max_{i,j\in S}\norm{x_i-x_j} + 2\sqrt{2}\cdot \max_{i,j\in S} \norm{x_i-x_j}\nonumber
    \end{align}
    The last inequality follows by~(\ref{eqn:diamRadiasBound}). Therefore
    \begin{align}
    \label{eqn:CenterFinalBound}
        \max_{i\in S\backslash S',j\in S' \backslash S}\! \norm{x_i-x_j}\! \leq \!(2\sqrt{2}\!+\!1)\! \max_{i,j\in S}\norm{x_i-x_j}.
    \end{align}
     Substituting (\ref{eqn:CenterFinalBound}) into (\ref{eqn:CenterDeviation}) yields the $(f,(2\sqrt{2}+1)f/(n-f))$-resilient averaging guarantee for the CenterwO algorithm.

     Since the CenterwO algorithm is deterministic, 
     by squaring both sides of the result for $(f,\lambda)$-resilient 
     averaging aggregator, we attain the result for $(\delta_{\max},\zeta)$-ARAgg.

     Then we proof the result for $(f,\kappa)$-robustness. We recall the first equality of (40) in~\cite{AGGPS23}
     \begin{align}
         &\kh{1-\frac{\sz{S'\backslash S}}{n-f}}^2\norm{\overline{x}_{S'} - \overline{x}_{S}}^2 = \norm{\frac{1}{n-f}\!\!\sum_{i\in S'\backslash S}\!\! (x_i \! -\! \overline{x}_{S'}) \!-\! \frac{1}{n-f}\!\!\sum_{i\in S\backslash S'}\!\! (x_i \! - \! \overline{x}_{S})}^2\!\!.
         \label{recallMix}
     \end{align}
     From~(\ref{recallMix}) we attain
     \begin{align}
         &\kh{1-\frac{f}{n-f}}^2\norm{\overline{x}_{S'} - \overline{x}_{S}}^2 \nonumber\\
         &\leq \norm{\frac{1}{n-f}\sum_{i\in S'\backslash S} (x_i-\overline{x}_{S'}) - \frac{1}{n-f}\sum_{i\in S\backslash S'} (x_i-\overline{x}_{S})}^2\nonumber\\
         &\leq \frac{1}{(n-f)^2}\kh{\sum_{i\in S'\backslash S} \norm{x_i-\overline{x}_{S'}} +\sum_{i\in S\backslash S'} \norm{x_i-\overline{x}_{S}}}^2\nonumber\\
         &\leq \frac{2f}{(n-f)^2}\kh{\sum_{i\in S'\backslash S} \norm{x_i-\overline{x}_{S'}}^2 +\sum_{i\in S\backslash S'} \norm{x_i-\overline{x}_{S}}^2} \label{eqn:CSineq}
     \end{align}
     The first inequality follows from (40) in~\cite{AGGPS23} and $\sz{S'\backslash S}\leq f$; 
     the second inequality is attained by using a series of triangle inequalities; the third inequality follows by the Cauchy-Schwarz inequality, and the fact that $\sz{S\backslash S'} = \sz{S'\backslash S} \leq f$.

     We observe that
     \begin{align}
         &\sum_{i\in S'\backslash S} \norm{x_i - \overline{x}_{S'}}^2\nonumber\\
         &\leq f \cdot (r')^2 \qquad&\text{by applying~(\ref{eqn:diamRadiasBound})}\nonumber\\
         &\leq 4 f \cdot(r^*)^2 \qquad\qquad&\text{by $r'\leq 2r^*$}\nonumber\\
         &\leq 4 f \cdot(r_{S})^2\qquad\qquad&\text{$r^*$ is optimal}\nonumber\\
         &\leq 4 f \max_{i\in S}\norm{x_i - \overline{x}_S}^2\label{eqn:preintermediate}\\
         &\leq 4f \sum_{i\in S}\norm{x_i - \overline{x}_S}^2\,,
         \label{eqn:intermediate}
     \end{align}
     where the fourth inequality holds because $r_S$ is the radius of the minimum ball enclosing $\{x_i\}_{i\in S}$.
     Therefore $$\kappa = \kh{1+\frac{f}{n-2f}}^2\cdot \frac{8f^2+2f}{n-f}\,,$$
     which concludes the proof for properties of the CenterwO aggregation rule.

    Now we investigate the $(f,\xi)$-robust averaging property of the CenterwO algorithm. We recall (40) in~\cite{AGGPS23}:
    \begin{align}
        & \kh{1-\frac{\sz{S'\backslash S}}{n-f}}^2 \norm{\overline{x}_S' - \overline{x}_S }^2 \leq \frac{2f}{(n-f)^2} \sup_{\norm{v}\leq 1}\sum_{i\in S'\backslash S} \abs{\left\langle v, x_i - \overline{x}_{S'}\right\rangle}^2+ \frac{2f}{(n-f)^2} \sup_{\norm{v}\leq 1}\sum_{i\in S\backslash S'}\abs{\left\langle v, x_i-\overline{x}_{S}\right\rangle}^2\,.
        \label{eqn:robAvgSep}
    \end{align}
    We analyze the supremum of the first term in~(\ref{eqn:robAvgSep}):
    \begin{align}
        & \sup_{\norm{v}\leq 1}\sum_{i\in S'\backslash S} \abs{\left\langle v, x_i - \overline{x}_{S'}\right\rangle}^2 \nonumber\\
        & \leq \sum_{i\in S'\backslash S} \sup_{\norm{v_i}\leq 1}\abs{\left\langle v_i, x_i - \overline{x}_{S'}\right\rangle}^2\nonumber\\
        & = \sum_{i\in S'\backslash S} \norm{x_i- \overline{x}_{S'}}^2\nonumber\\
        &\leq 4f \max_{i\in S}\norm{x_i - \overline{x}_S}^2 \qquad \text{ by applying~(\ref{eqn:preintermediate})} \nonumber\\
        & = 4f \max_{i\in S} \sup_{\norm{v_i}\leq 1} \abs{\left\langle v_i, x_i - \overline{x}_{S}\right\rangle}^2\,,
        \label{eqn:c1}
    \end{align}
    where the two equalities are attained 
    when the vectors $v_i$ and $\overline{x}_{S'}$ (resp. $v_i$ and $\overline{x}_{S}$) have the same direction.  
    We note that for all $i\in S$, there are
    \begin{align}
     \sup_{\norm{v_i}\leq 1} \abs{\left\langle v_i, x_i - \overline{x}_{S}\right\rangle}^2 &= \sup_{\norm{v_i}\leq 1} v_i^\top (x_i - \overline{x}_{S}) (x_i - \overline{x}_{S})^\top v_i \nonumber\\
     &\leq \sup_{\norm{v_i}\leq 1} v_i^\top \kh{M_S} v_i\,.
    \label{eqn:c2}
    \end{align}
    The inequality follows by the Courant-Fischer min-max theorem since a positive semi-definite matrix is added to the rank-one semi-definite matrix in the middle. 
    By combining (\ref{eqn:robAvgSep}), (\ref{eqn:c1}), and (\ref{eqn:c2}) we attain
    \begin{align}
        & \kh{1-\frac{\sz{S'\backslash S}}{n-f}}^2 \norm{\overline{x}_S' - \overline{x}_S }^2
        \leq \frac{8f^2+ 2f}{(n-f)^2} \lambda_{\max}(M_S)\,,\nonumber
    \end{align}
    where the Courant-Fischer min-max theorem is also applied to the second term in (\ref{eqn:robAvgSep}). Note that $\sz{S'\backslash S} \leq f$, we attain the result for $(f,\xi)$-robust averaging of the CenterwO algorithm.  
    
    Next we prove the properties of the MeanwO aggregation rule. Let $\widehat{c}$ be the optimal cluster center for the problem of 1-mean clustering with outliers, and let $\widehat{r}:= \max_{x\in K(\widehat{c}) }\norm{x - \widehat{c}}$. 
    The MeanwO Algorithm finds a cluster center (medoid) $\widetilde{c}$ from the data points. We let $\widetilde{r}:= \max_{x\in K(\widetilde{c}) }\norm{x - \widetilde{c}}$. We further denote by $\widetilde{S}$ the set of vector indices based on which the returned vector is calculated, i.e.\ $\widetilde{S}:=\{i \mid x_i\in K(\widetilde{c})\}$. Similarly, let $\widehat{S}:=\{i \mid x_i\in K(\widehat{c})\}$. 
    


    For the $(f,\kappa)$-robustness, we show that
    \begin{align}
        &\kh{1-\frac{f}{n-f}}^2\norm{\overline{x}_{\widetilde{S}} - \overline{x}_{S}}^2 \nonumber\\
        &\leq  \frac{2f}{(n-f)^2}\kh{\sum_{i\in \widetilde{S}\backslash S} \norm{x_i-\overline{x}_{\widetilde{S}}}^2 +\sum_{i\in S\backslash \widetilde{S}} \norm{x_i-\overline{x}_{S}}^2}\nonumber\\
        &\leq \frac{2f}{(n-f)^2}\kh{\sum_{i\in \widetilde{S}} \norm{x_i-\overline{x}_{\widetilde{S}}}^2 +\sum_{i\in S} \norm{x_i-\overline{x}_{S}}^2}\nonumber\\
        &\leq \frac{2f}{(n-f)^2}\kh{\sum_{i\in \widetilde{S}} \norm{x_i-\widetilde{c}}^2 +\sum_{i\in S} \norm{x_i-\overline{x}_{S}}^2}\nonumber\\
        &\leq \frac{2f}{(n-f)^2}\kh{ 2\cdot \sum_{i\in \widehat{S}} \norm{x_i-\overline{x}_{\widehat{S}}}^2 +\sum_{i\in S} \norm{x_i-\overline{x}_{S}}^2}\nonumber\\
        &\leq \frac{6f}{(n-f)^2} \sum_{i\in S} \norm{x_i-\overline{x}_{S}}^2\,,\nonumber
    \end{align}
    where the second inequality is due to (\ref{eqn:CSineq}); the third inequality follows by the fact that the centroid (or center of mass) $\overline{x}_{\widetilde{S}}$ minimizes the sum of squared distances to the cluster center; the fourth inequality follows by the $2$-approximation guarantee of the MeanwO algorithm; the last inequality is attained since $\widehat{S}$ minimizes $\sum_{i\in S} \norm{x_i - \overline{x}_{S}}$.

    From Propostion 8 and Proposition 9 in~\cite{AFGGPS23} we arrive at the results for $(f,\lambda)$-resilience and $(\delta_{max}, \zeta)$-agnostic robustness of the MeanwO algorithm.

    The result for the $(f,\xi)$-robust averaging property follows straightforwardly from the fact that $\sum_{i\in S} \norm{x_i - \overline{x}_S}^2$ is the trace (sum of eigenvalues) of the matrix $M_S$, which is positive semi-definite and has at most $\min\{n-f, d\}$ non-zero eigenvalues.
\end{proof}

\section{Details for the Two-Phase Algorithm}
\label{appdxSec:twophase}
\begin{algorithm}[tb]
\caption{Outer}
\label{alg:ClusterOuter}
\textbf{Input}: a set of $n$ vectors $X$ in $\mathbb{R}^d$, and an integer $f < \frac{n}{2}$.\\
\textbf{Output}: the mass center of the $n-f$ vectors outside of an approximate minimum 1-center/mean cluster with $f$ in-cluster vectors.
\begin{algorithmic}[1] 
\FOR{ $i = 1,\ldots, n$}
\STATE Find $N(x_i)$, the $f$ closest vectors in $X$ to the vector $x_i$ (including $x_i$), breaking ties arbitrarily;
\STATE Let ${\mathrm{dist}}_i = \mathrm{cost}(x_i, N(x_i))$;\quad
 {\text{/*see (5) for definition of 1-center/mean cost*/}}
\ENDFOR
\STATE Let $j = \argmin_{i} {\mathrm{dist}}_i$;
\RETURN $\frac{1}{n-f}\sum_{x\in (X\backslash N(x_j))} x$;
\end{algorithmic}
\end{algorithm}

Now we describe the Inner and Outer algorithm used in Algorithm~3.
The Inner algorithm should be resilient to siege attacks. In our realization, the Inner algorithm directly calls the 1-Center/Mean Clustering with Outliers algorithm shown in Algorithm~2. The Outer algorithm should be designed to defend against sneak attacks. In our implementation, the Outer algorithm returns the average of the $(n-f)$ vectors \emph{outside} of an approximate minimum 1-center/mean cluster with $f$ in-cluster vectors. The pseudo code of the Outer algorithm is shown in Algorithm~\ref{alg:ClusterOuter}. 

\section{Details for Experiments}
\label{appdxSec:experiments}

\subsection{Simulation Environment}
Both the server and workers are simulated on a cloud virtual machine equipped with a 32-core Intel Xeon Gold 6278@2.6G CPU, 128GB of memory, and a 16GB Quadro RTX 5000 GPU.

\subsection{Attacks}
Here we give the detailed description of adversarial attacks in the experiments.

1) \textbf{LF}: the label flipping attack, where the labels of the images are replaced by labels described by a deterministic permutation in corrupted workers; 

2) \textbf{SF}: the sign flipping attack, where each corrupted worker sends the negative of its true update vector; 

3) \textbf{Gauss}: the Gauss attack, where random Gaussian vectors replace the update vectors with the same vector norm; 

4) \textbf{Omn}: the omniscient attack~\cite{NIPS2017_f4b9ec30}, where all the corrupted workers send the average of all update vectors without corruptions minus the average vector of corrupted workers multiplied by $2n/f$; 

5) \textbf{Empire}: the fall of empire attack ~\cite{XKG20}, where the update vectors or corrupted workers are set to the average of the update vectors without corruptions multiplied by $-0.1$;

6) \textbf{SV}: the scaled variance~\cite{NEURIPS2019_ec1c5914, AELA21} attack, where the corrupted workers set their update vectors to the mean of all workers, shifted by $20$ times the standard deviation in each coordinate.

7) \textbf{PGA}: In particular, we customize more sophisticated attacks, the PGA algorithm, to attack various aggregation rules~\cite{SHKR22}. PGA algorithm leverages STAT-OPT attacks to generate a malicious update, and all Byzantine workers send the same malicious update to the server. As the proposed 2PRASHB algorithm could easily filter out all malicious updates when they are at the same position in multi-dimensional space, we eliminate the data-based stochastic gradient ascent (SGA) in PGA to improve the running efficiency. STAT-OPT computes the average updates $ \nabla^b $ from benign workers, and computes a static malicious direction $ w = -\mathrm{sign}(\nabla^b)$. Moreover, STAT-OPT attacks tailor themselves to the target aggregation rule (Agg) by searching a suboptimal $\gamma$ so that the final malicious update $ \nabla' = \nabla^b - \gamma*w $ could circumvent the target Agg. 

In order to be consistent with the experimental settings of the paper by~\citet{SHKR22}, we only conduct experiments on extremely non-iid dataset drawn from FEMNIST. Corresponding results are presented in Table~\ref{tab:niid_full}, clearly showing the robustness of Cent2P and Mean2P.

\subsection{Architecture of Client Model}
For the image classification task on FEMNIST dataset, we construct a Convolutional Neural Network (CNN) consisting of two convolutional layers. Each convolutional layer has a kernel size of (5 $\times$ 5), and we use 32 and 64 kernels, respectively. After each convolutional layer, we apply a ReLU non-linear activation function followed by a Max-pooling layer with a (2 $\times$ 2) kernel size. We incorporate a fully-connected layer for classification. To train the model, we employ the Cross Entropy loss function and the Stochastic Gradient Descent (SGD) optimizer. 

\subsection{Additional Experimental Evaluations}
Table~\ref{tab:iid_full} and Table~\ref{tab:niid_full} show the full results on FEMNIST dataset for 3 levels of adversarial rates: 0.1, 0.2, and 0.4. Each cell shows the average and standard deviation of testing accuracy in $5$ simulations. The results clearly show the consistent resilience of our methods.

\textbf{CIFAR-10}: To further show the robustness of the proposed aggregation frameworks, we also run experiments on the CIFAR-10 dataset, another typical image classification benchmark. The CIFAR-10 dataset consists of 60000 32x32 color images in 10 classes, with 6000 images per class~\cite{Krizhevsky2009LearningML}. We use a small dataset of 35 clients uniformly sampled from the CIFAR-10 dataset, and each client contains 300 train samples and 60 test samples. As presented in Table~\ref{table:resp_1}, the proposed algorithms show consistent advantages against all baselines.

\begin{table}[!htbp]
    \centering
    \caption{Performance comparison on the uniform sampling (homogeneous) datasets.}
    \label{tab:iid_full}
    \begin{tabular}{ccccccccccccccccccccc}
        \toprule
        Rate & Aggregation & LF & SF & Gauss & Omn & Empire & SV  & Worst\\
        \midrule
        \multirow{8}{*}{0.1} & Avg & 0.77 $\pm$ 0.02 & 0.73 $\pm$ 0.06 & 0.81 $\pm$ 0.01 \cellcolor{lightGreen}& 0.01 $\pm$ 0.00 & 0.81 $\pm$ 0.01 \cellcolor{lightGreen}& 0.75 $\pm$ 0.03 & 0.01\\
        & GM & 0.80 $\pm$ 0.01 \cellcolor{lightGreen}& 0.79 $\pm$ 0.00 \cellcolor{lightGreen}& 0.80 $\pm$ 0.01 & 0.19 $\pm$ 0.27 & 0.80 $\pm$ 0.01 & 0.66 $\pm$ 0.07  & 0.19\\
        & CClip & 0.70 $\pm$ 0.01 & 0.68 $\pm$ 0.01 & 0.70 $\pm$ 0.01 & 0.60 $\pm$ 0.04 & 0.67 $\pm$ 0.01 & 0.68 $\pm$ 0.03 & 0.60\\
        & CWM & 0.60 $\pm$ 0.02 & 0.60 $\pm$ 0.02 & 0.55 $\pm$ 0.06 & 0.32 $\pm$ 0.14 & 0.55 $\pm$ 0.04 & 0.56 $\pm$ 0.03 & 0.32\\
        & CWTM & 0.63 $\pm$ 0.01 & 0.60 $\pm$ 0.03 & 0.66 $\pm$ 0.01 & 0.23 $\pm$ 0.08 & 0.57 $\pm$ 0.02 & 0.28 $\pm$ 0.05 & 0.23\\
        & Krum & 0.51 $\pm$ 0.02 & 0.51 $\pm$ 0.02 & 0.50 $\pm$ 0.02 & 0.52 $\pm$ 0.02 & 0.35 $\pm$ 0.06 & 0.05 $\pm$ 0.00 & 0.05\\
        & Cent2P & 0.78 $\pm$ 0.02 & 0.78 $\pm$ 0.01 & 0.80 $\pm$ 0.01 & 0.80 $\pm$ 0.00 \cellcolor{lightGreen}& 0.80 $\pm$ 0.01 & 0.76 $\pm$ 0.01 \cellcolor{lightGreen}& 0.76\cellcolor{lightGreen}\\
        & Mean2P & 0.78 $\pm$ 0.02 & 0.79 $\pm$ 0.01 \cellcolor{lightGreen}& 0.81 $\pm$ 0.01 \cellcolor{lightGreen}& 0.79 $\pm$ 0.01 & 0.80 $\pm$ 0.01 & 0.76 $\pm$ 0.01 \cellcolor{lightGreen}& 0.76\cellcolor{lightGreen}\\
        \midrule
        \multirow{8}{*}{0.2} & Avg & 0.74 $\pm$ 0.02 & 0.53 $\pm$ 0.25 & 0.80 $\pm$ 0.01\cellcolor{lightGreen} & 0.00 $\pm$ 0.00 & 0.81 $\pm$ 0.00 \cellcolor{lightGreen}& 0.62 $\pm$ 0.02 & 0.00\\
        & GM & 0.78 $\pm$ 0.01 \cellcolor{lightGreen}& 0.76 $\pm$ 0.01 \cellcolor{lightGreen}& 0.80 $\pm$ 0.00 \cellcolor{lightGreen}& 0.05 $\pm$ 0.02 & 0.79 $\pm$ 0.01 & 0.41 $\pm$ 0.09 & 0.05\\
        & CClip & 0.69 $\pm$ 0.01 & 0.64 $\pm$ 0.02 & 0.68 $\pm$ 0.01 & 0.49 $\pm$ 0.05 & 0.57 $\pm$ 0.04 & 0.58 $\pm$ 0.02 & 0.49\\
        & CWM & 0.60 $\pm$ 0.01 & 0.56 $\pm$ 0.03 & 0.59 $\pm$ 0.04 & 0.05 $\pm$ 0.02 & 0.49 $\pm$ 0.06 & 0.44 $\pm$ 0.07 & 0.05\\
        & CWTM & 0.56 $\pm$ 0.03 & 0.55 $\pm$ 0.03 & 0.56 $\pm$ 0.04 & 0.02 $\pm$ 0.01 & 0.38 $\pm$ 0.03 & 0.10 $\pm$ 0.01 & 0.02\\
        & Krum & 0.53 $\pm$ 0.02 & 0.44 $\pm$ 0.05 & 0.49 $\pm$ 0.02 & 0.52 $\pm$ 0.01 & 0.30 $\pm$ 0.05 & 0.05 $\pm$ 0.00 & 0.05\\
        & Cent2P & 0.78 $\pm$ 0.02 \cellcolor{lightGreen}& 0.76 $\pm$ 0.01 \cellcolor{lightGreen}& 0.79 $\pm$ 0.01 & 0.80 $\pm$ 0.01 \cellcolor{lightGreen}& 0.79 $\pm$ 0.00 & 0.73 $\pm$ 0.02 & 0.73\\
        & Mean2P & 0.77 $\pm$ 0.01 & 0.76 $\pm$ 0.02 \cellcolor{lightGreen}& 0.80 $\pm$ 0.01 \cellcolor{lightGreen}& 0.79 $\pm$ 0.01 & 0.79 $\pm$ 0.01 & 0.75 $\pm$ 0.01 \cellcolor{lightGreen}& 0.75\cellcolor{lightGreen}\\
        \midrule
        \multirow{8}{*}{0.4} & Avg & 0.56 $\pm$ 0.03 & 0.04 $\pm$ 0.01 & 0.79 $\pm$ 0.00 \cellcolor{lightGreen}& 0.00 $\pm$ 0.00 & 0.79 $\pm$ 0.01 \cellcolor{lightGreen}& 0.32 $\pm$ 0.04 & 0.00\\
        & GM & 0.64 $\pm$ 0.03 & 0.46 $\pm$ 0.16 & 0.79 $\pm$ 0.00 \cellcolor{lightGreen}& 0.00 $\pm$ 0.00 & 0.74 $\pm$ 0.00 & 0.07 $\pm$ 0.02 & 0.00\\
        & CClip & 0.58 $\pm$ 0.04 & 0.45 $\pm$ 0.06 & 0.64 $\pm$ 0.01 & 0.20 $\pm$ 0.02 & 0.26 $\pm$ 0.04 & 0.26 $\pm$ 0.06 & 0.20\\
        & CWM & 0.50 $\pm$ 0.05 & 0.45 $\pm$ 0.05 & 0.54 $\pm$ 0.04 & 0.02 $\pm$ 0.01 & 0.07 $\pm$ 0.02 & 0.05 $\pm$ 0.00 & 0.02\\
        & CWTM & 0.51 $\pm$ 0.02 & 0.39 $\pm$ 0.03 & 0.55 $\pm$ 0.04 & 0.02 $\pm$ 0.01 & 0.05 $\pm$ 0.01 & 0.05 $\pm$ 0.00 & 0.02\\
        & Krum & 0.53 $\pm$ 0.03 & 0.36 $\pm$ 0.04 & 0.47 $\pm$ 0.02 & 0.10 $\pm$ 0.06 & 0.01 $\pm$ 0.01 & 0.05 $\pm$ 0.00 & 0.01\\
        & Cent2P & 0.74 $\pm$ 0.02 \cellcolor{lightGreen}& 0.62 $\pm$ 0.03 \cellcolor{lightGreen}& 0.76 $\pm$ 0.00 & 0.79 $\pm$ 0.00 \cellcolor{lightGreen}& 0.74 $\pm$ 0.01 & 0.76 $\pm$ 0.02 \cellcolor{lightGreen}& 0.62\cellcolor{lightGreen}\\
        & Mean2P & 0.73 $\pm$ 0.03 & 0.61 $\pm$ 0.01 & 0.76 $\pm$ 0.01 & 0.79 $\pm$ 0.00 \cellcolor{lightGreen}& 0.73 $\pm$ 0.01 & 0.75 $\pm$ 0.02 & 0.61\\
        \bottomrule
    \end{tabular}
\end{table}
\begin{table}[ht]
    \centering
    \setlength{\tabcolsep}{5pt}
    \caption{Performance comparison on the nonuniform sampling (heterogeneous) datasets.}
    \label{tab:niid_full}
    \begin{tabular}{ccccccccccccccccccccc}
        \toprule
        \makebox[0.01\textwidth][c]{Rate} & \makebox[0.08\textwidth][c]{Aggregation} & LF & SF & Gauss & Omn & Empire & SV & PGA & \makebox[0.02\textwidth][c]{Worst}\\
        \midrule
        \multirow{9}{*}{0.1} & Avg & 0.81 $\pm$ 0.01 & 0.30 $\pm$ 0.24 & 0.88 $\pm$ 0.01 \cellcolor{lightGreen}& 0.00 $\pm$ 0.00 & 0.88 $\pm$ 0.01 \cellcolor{lightGreen}& 0.82 $\pm$ 0.01 & 0.03 $\pm$ 0.04 & 0.00\\
        & GM & 0.83 $\pm$ 0.01 & 0.60 $\pm$ 0.22 & 0.88 $\pm$ 0.01 \cellcolor{lightGreen}& 0.09 $\pm$ 0.04 & 0.86 $\pm$ 0.01 & 0.76 $\pm$ 0.02 & 0.00 $\pm$ 0.00 & 0.00\\
        & CClip & 0.72 $\pm$ 0.01 & 0.51 $\pm$ 0.11 & 0.74 $\pm$ 0.01 & 0.50 $\pm$ 0.03 & 0.68 $\pm$ 0.01 & 0.68 $\pm$ 0.01 & 0.43 $\pm$ 0.02 & 0.43\\
        & CWM & 0.58 $\pm$ 0.05 & 0.55 $\pm$ 0.08 & 0.59 $\pm$ 0.06 & 0.11 $\pm$ 0.04 & 0.23 $\pm$ 0.06 & 0.52 $\pm$ 0.03 & 0.12 $\pm$ 0.03 &0.11\\
        & CWTM & 0.61 $\pm$ 0.03 & 0.57 $\pm$ 0.06 & 0.70 $\pm$ 0.02 & 0.11 $\pm$ 0.00 & 0.25 $\pm$ 0.04 & 0.18 $\pm$ 0.05 & 0.04 $\pm$ 0.03 & 0.04\\
        & Krum & 0.29 $\pm$ 0.03 & 0.25 $\pm$ 0.04 & 0.28 $\pm$ 0.02 & 0.30 $\pm$ 0.05 & 0.15 $\pm$ 0.07 & 0.00 $\pm$ 0.00 & 0.19 $\pm$ 0.05 &0.00\\
        & Cent2P & 0.84 $\pm$ 0.01 \cellcolor{lightGreen}& 0.83 \cellcolor{lightGreen}$\pm$ 0.01 & 0.87 $\pm$ 0.01 & 0.88 $\pm$ 0.01 \cellcolor{lightGreen}& 0.87 $\pm$ 0.01 & 0.78 $\pm$ 0.03 & 0.87 $\pm$ 0.00 \cellcolor{lightGreen} & 0.78\\
        & Mean2P & 0.82 $\pm$ 0.02 & 0.83 \cellcolor{lightGreen}$\pm$ 0.02 & 0.87 $\pm$ 0.01 & 0.88 $\pm$ 0.01 \cellcolor{lightGreen}& 0.88 $\pm$ 0.01 \cellcolor{lightGreen}& 0.83 $\pm$ 0.02 \cellcolor{lightGreen} & 0.87 $\pm$ 0.01 \cellcolor{lightGreen} & 0.82 \cellcolor{lightGreen}\\
        \midrule
        \multirow{9}{*}{0.2} & Avg & 0.72 $\pm$ 0.01 & 0.10 $\pm$ 0.02 & 0.87 $\pm$ 0.02 \cellcolor{lightGreen}& 0.00 $\pm$ 0.00 & 0.87 $\pm$ 0.01 \cellcolor{lightGreen}& 0.73 $\pm$ 0.03 & 0.03 $\pm$ 0.04 & 0.00\\
        & GM & 0.76 $\pm$ 0.05 \cellcolor{lightGreen}& 0.10 $\pm$ 0.03 & 0.86 $\pm$ 0.01 & 0.03 $\pm$ 0.02 & 0.84 $\pm$ 0.02 & 0.39 $\pm$ 0.03 & 0.00 $\pm$ 0.00 & 0.00\\
        & CClip & 0.66 $\pm$ 0.04 & 0.25 $\pm$ 0.14 & 0.72 $\pm$ 0.02 & 0.34 $\pm$ 0.02 & 0.49 $\pm$ 0.03 & 0.44 $\pm$ 0.04 & 0.39 $\pm$ 0.01 & 0.25\\
        & CWM & 0.49 $\pm$ 0.03 & 0.50 $\pm$ 0.10 & 0.60 $\pm$ 0.05 & 0.02 $\pm$ 0.02 & 0.04 $\pm$ 0.01 & 0.33 $\pm$ 0.04 & 0.01 $\pm$ 0.01 & 0.01\\
        & CWTM & 0.49 $\pm$ 0.04 & 0.48 $\pm$ 0.06 & 0.63 $\pm$ 0.04 & 0.02 $\pm$ 0.01 & 0.06 $\pm$ 0.02 & 0.04 $\pm$ 0.01 & 0.04 $\pm$ 0.03 & 0.02\\
        & Krum & 0.27 $\pm$ 0.04 & 0.22 $\pm$ 0.03 & 0.26 $\pm$ 0.02 & 0.26 $\pm$ 0.03 & 0.16 $\pm$ 0.07 & 0.01 $\pm$ 0.01 & 0.18 $\pm$ 0.03 & 0.01\\
        & Cent2P & 0.75 $\pm$ 0.07 & 0.69 $\pm$ 0.06 & 0.86 $\pm$ 0.01 & 0.86 $\pm$ 0.02 \cellcolor{lightGreen}& 0.84 $\pm$ 0.02 & 0.78 $\pm$ 0.02 & 0.84 $\pm$ 0.01 & 0.69\\
        & Mean2P & 0.76 $\pm$ 0.05 \cellcolor{lightGreen}& 0.72 $\pm$ 0.04 \cellcolor{lightGreen}& 0.85 $\pm$ 0.01 & 0.85 $\pm$ 0.02 & 0.83 $\pm$ 0.03 & 0.80 $\pm$ 0.01 \cellcolor{lightGreen} & 0.85 $\pm$ 0.01 \cellcolor{lightGreen} & 0.72\cellcolor{lightGreen}\\
        \midrule
        \multirow{9}{*}{0.4} & Avg & 0.57 $\pm$ 0.04 \cellcolor{lightGreen}& 0.07 $\pm$ 0.02 & 0.79 $\pm$ 0.05 \cellcolor{lightGreen}& 0.00 $\pm$ 0.00 & 0.79 $\pm$ 0.05 \cellcolor{lightGreen}& 0.37 $\pm$ 0.03 & 0.03 $\pm$ 0.04 & 0.00\\
        & GM & 0.55 $\pm$ 0.06 & 0.07 $\pm$ 0.03 & 0.78 $\pm$ 0.05 & 0.00 $\pm$ 0.00 & 0.57 $\pm$ 0.08 & 0.06 $\pm$ 0.05 & 0.03 $\pm$ 0.04 & 0.00\\
        & CClip & 0.44 $\pm$ 0.05 & 0.23 $\pm$ 0.03 & 0.63 $\pm$ 0.05 & 0.12 $\pm$ 0.01 & 0.16 $\pm$ 0.02 & 0.19 $\pm$ 0.03 & 0.28 $\pm$ 0.08 & 0.12\\
        & CWM & 0.37 $\pm$ 0.05 & 0.23 $\pm$ 0.05 & 0.57 $\pm$ 0.05 & 0.00 $\pm$ 0.00 & 0.01 $\pm$ 0.02 & 0.02 $\pm$ 0.02 & 0.00 $\pm$ 0.01 & 0.00\\
        & CWTM & 0.37 $\pm$ 0.05 & 0.20 $\pm$ 0.05 & 0.57 $\pm$ 0.04 & 0.02 $\pm$ 0.02 & 0.03 $\pm$ 0.03 & 0.00 $\pm$ 0.00 & 0.05 $\pm$ 0.03 & 0.00\\
        & Krum & 0.14 $\pm$ 0.07 & 0.14 $\pm$ 0.06 & 0.21 $\pm$ 0.04 & 0.13 $\pm$ 0.04 & 0.00 $\pm$ 0.00 & 0.00 $\pm$ 0.00 & 0.08 $\pm$ 0.02 & 0.00\\
        & Cent2P & 0.44 $\pm$ 0.10 & 0.30 $\pm$ 0.26 & 0.74 $\pm$ 0.06 & 0.76 $\pm$ 0.05 \cellcolor{lightGreen}& 0.54 $\pm$ 0.17 & 0.77 $\pm$ 0.05 & 0.77 $\pm$ 0.03 \cellcolor{lightGreen} & 0.30\\
        & Mean2P & 0.43 $\pm$ 0.12 & 0.35 $\pm$ 0.22 \cellcolor{lightGreen}& 0.74 $\pm$ 0.06 & 0.75 $\pm$ 0.04 & 0.55 $\pm$ 0.16 & 0.78 $\pm$ 0.05 \cellcolor{lightGreen} & 0.77 $\pm$ 0.03 \cellcolor{lightGreen} & 0.35\cellcolor{lightGreen}\\
        \bottomrule
    \end{tabular}
\end{table}
\begin{table}[ht]
    \centering
    \Large
    \caption{Performance comparison on CIFAR-10 dataset with the uniform sampling at an adversarial rate of $0.2$.}
    \label{table:resp_1}
    \scalebox{0.7}{
    \begin{tabular}{ccccccccccccccccccccc}
        \toprule
        Aggregation & LF & SF & Gauss & Omn & Empire & SV & Worst \\
        \midrule
        FedAvg & 0.58 & 0.33 & 0.57 & 0.10 & 0.56 & 0.25 & 0.10 \\
        GM & 0.60\cellcolor{lightGreen} & 0.44 & 0.58\cellcolor{lightGreen} & 0.10 & 0.59\cellcolor{lightGreen} & 0.21 & 0.10 \\
        CClip & 0.55 & 0.48 & 0.52 & 0.37 & 0.50 & 0.29 & 0.29 \\
        CWM & 0.47 & 0.39 & 0.46 & 0.09 & 0.35 & 0.16 & 0.09 \\
        CWTM & 0.49 & 0.43 & 0.48 & 0.12 & 0.42 & 0.18 & 0.12 \\
        Krum & 0.18 & 0.15 & 0.21 & 0.21 & 0.10 & 0.10 & 0.10 \\
        Cent2P & 0.59 & 0.49\cellcolor{lightGreen} & 0.56 & 0.57\cellcolor{lightGreen} & 0.57 & 0.46 & 0.46 \\
        Mean2P & 0.59 & 0.47 & 0.56 & 0.56 & 0.57 & 0.49\cellcolor{lightGreen} & 0.47\cellcolor{lightGreen} \\
        \bottomrule
    \end{tabular}
    }
\end{table}

Figure~\ref{fig:iid_std_Result} shows the performance comparison on homogeneous datasets during the training processes. Error bars show the standard deviations. Our methods are among the ones with best performance under all attacks. For homogeneous datasets, CenterwO and MeanwO are the only rules which resist the Omn attack. We have discussed the results for heterogeneous datasets in the main paper. Figure ~\ref{fig:niid_std_Result} shows the results with error bars.

\begin{figure*}[ht]
     \centering
\includegraphics[width=\textwidth]{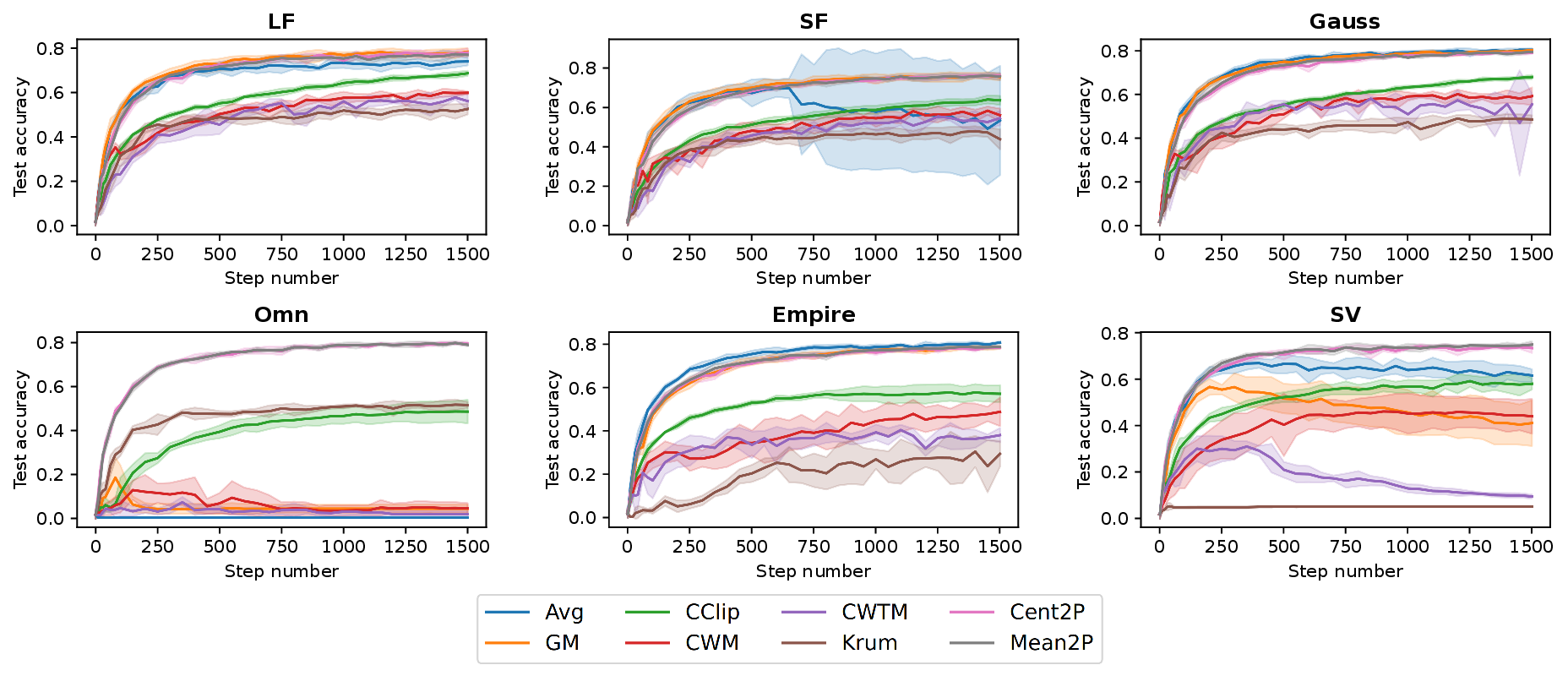}
 \caption{Performance comparison on the homogeneous datasets at an adversarial rate of $0.2$.}
 \label{fig:iid_std_Result}
\end{figure*}

\begin{figure*}[ht]
     \centering
\includegraphics[width=\textwidth]{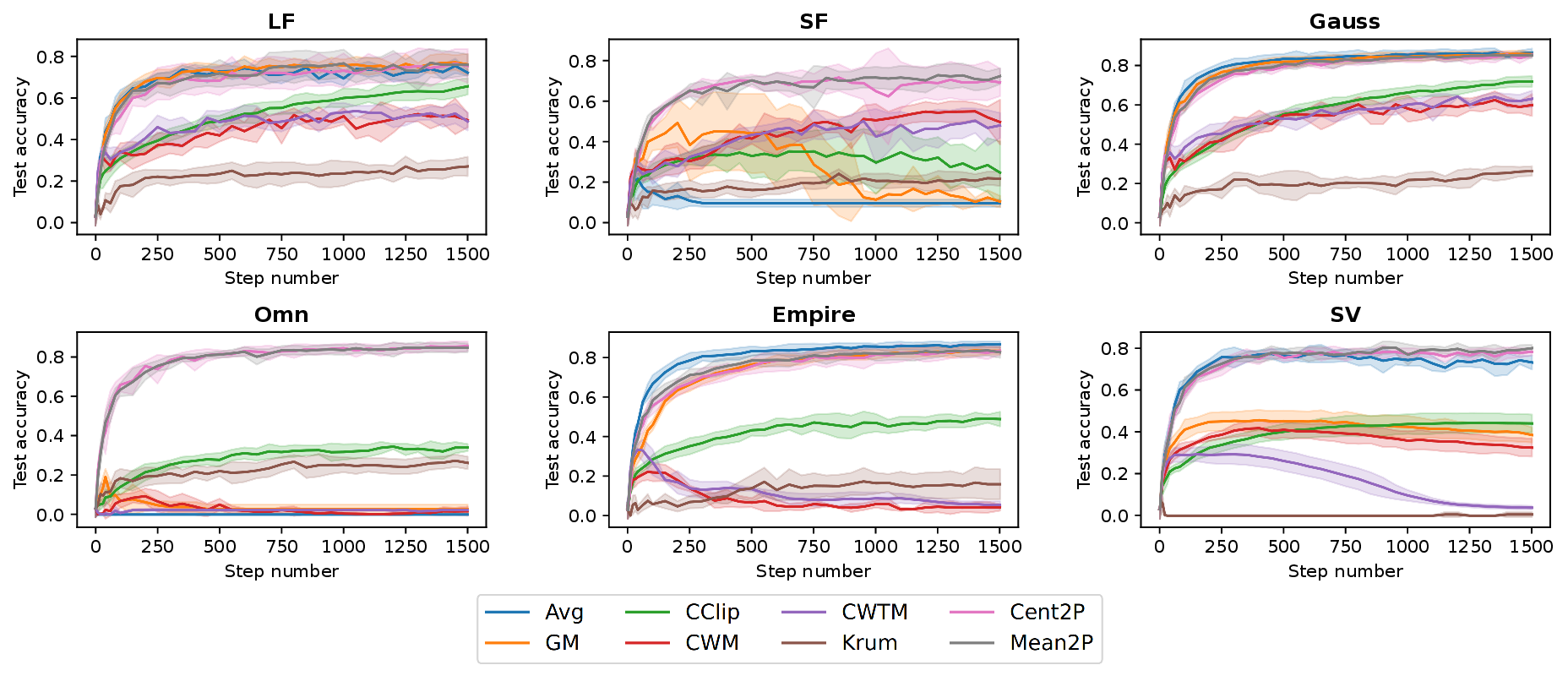}
 \caption{Performance comparison on the heterogeneous datasets at an adversarial rate of $0.2$.}
 \label{fig:niid_std_Result}
\end{figure*}

\subsection{Ablation Experiments}
Table~\ref{tab:ablation} presents a performance comparison between RASHB (Cent1P/Mean1P) and 2PRASHB (Cent2P/Mean2P) on the uniform sampling datasets, with an adversarial rate of $0.2$. Across LF, SF, Gauss, and Empire attacks, both RASHB and 2PRASHB yield comparable outcomes. However, when subjected to Omn and SV attacks, 2PRASHB significantly outperforms PRASHB. Particularly noteworthy is the accuracy achieved under Omn attacks, with Cent2P and Mean2P achieving accuracies of 0.80 and 0.79, respectively, whereas Cent1P and Mean1P only attain 0.12 and 0.01. These results effectively showcase the superiority of 2PRASHB.

\begin{table}[ht]
    \centering
    \Large
    \caption{Performance comparison of RASHB (Cent1P/Mean1P) and 2PRASHB(Cent2P/Mean2P) on the uniform sampling datasets at an adversarial rate of $0.2$.}
    \label{tab:ablation}
    \scalebox{0.78}{
    \begin{tabular}{ccccccccccccccccccccc}
        \toprule
        Aggregation & LF & SF & Gauss & Omn & Empire & SV & Worst \\
        \midrule
        Cent1P & 0.78 $\pm$ 0.02 & 0.76 $\pm$ 0.02 & 0.79 $\pm$ 0.01 & 0.12 $\pm$ 0.14 & 0.75 $\pm$ 0.01 & 0.54 $\pm$ 0.01 & 0.12 \cellcolor{red} \\
        Mean1P & 0.77 $\pm$ 0.02 & 0.76 $\pm$ 0.01 & 0.78 $\pm$ 0.01 & 0.01 $\pm$ 0.00 & 0.77 $\pm$ 0.02 & 0.57 $\pm$ 0.02 & 0.01 \cellcolor{red} \\
        Cent2P & 0.78 $\pm$ 0.02 & 0.76 $\pm$ 0.01 & 0.79 $\pm$ 0.01 & 0.80 $\pm$ 0.01 & 0.79 $\pm$ 0.00 & 0.73 $\pm$ 0.02 & 0.73 \\
        Mean2P & 0.77 $\pm$ 0.01 & 0.76 $\pm$ 0.02 & 0.80 $\pm$ 0.01 & 0.79 $\pm$ 0.01 & 0.79 $\pm$ 0.01 & 0.75 $\pm$ 0.01 & 0.75 \\
        \bottomrule
    \end{tabular}
    }
\end{table}


\end{document}